\definecolor{cvprblue}{rgb}{0.21,0.49,0.74}
\title{Is Temporal Prompting All We Need For Limited Labeled Action Recognition?}
\author{
Shreyank N Gowda\textsuperscript{1}\quad
Boyan Gao\textsuperscript{2}\quad
Xiao Gu\textsuperscript{2}\quad
Xiaobo Jin\textsuperscript{3}\\ \\
\textsuperscript{1}University of Nottingham\quad
\textsuperscript{2}University of Oxford\quad
\textsuperscript{3}Xi'an Jiaotong-Liverpool University
}
\begin{document}
\maketitle
\begin{abstract}
 Video understanding has shown remarkable improvements in recent years, largely dependent on the availability of large scaled labeled datasets. Recent advancements in visual-language models, especially based on contrastive pretraining, have shown remarkable generalization in zero-shot tasks, helping to overcome this dependence on labeled datasets. Adaptations of such models for videos, typically involve modifying the architecture of vision-language models to cater to video data. However, this is not trivial, since such adaptations are mostly computationally intensive and struggle with temporal modeling. We present TP-CLIP, an adaptation of CLIP that leverages temporal visual prompting for temporal adaptation without modifying the core CLIP architecture. This preserves its generalization abilities. TP-CLIP efficiently integrates into the CLIP architecture, leveraging its pre-trained capabilities for video data. Extensive experiments across various datasets demonstrate its efficacy in zero-shot and few-shot learning, outperforming existing approaches with fewer parameters and computational efficiency. In particular, we use just \emph{1/3} the GFLOPs and \emph{1/28} the number of tuneable parameters in comparison to recent state-of-the-art and still \emph{outperform} it by up to 15.8\% depending on the task and dataset.
\end{abstract}    
\section{Introduction}
\label{sec:intro}

Video understanding has significantly evolved recently~\cite{gowda2021smart,yang2020temporal,wang2021tdn,mazzia2022action}, fueled by computer vision and machine learning breakthroughs. Central to this progress are large language models (LLMs)~\cite{kenton2019bert} and visual-language models~\cite{clip,li2022blip,singh2022flava,alayrac2022flamingo}, showcasing potential across image-language tasks. Models like CLIP~\cite{clip} and its video adaptations~\cite{rao2022denseclip,wang2021actionclip,wasim2023vita} have been pivotal in advancing this field.

Historically, video understanding heavily relied on extensive labeled datasets~\cite{i3d,kinetics600}. However, this reliance shifted with the emergence of LLMs and visual-language models, showcasing strong zero-shot generalization capabilities, reducing the need for labeled data. Early efforts like VideoBERT~\cite{sun2019videobert} employed self-supervised learning from video-text data, validating pre-training and fine-tuning effectiveness across multiple video tasks. This approach evolved with diverse architectures and strategies~\cite{vificlip,wasim2023vita,ivl}.

\begin{figure}[t]
    \centering    \includegraphics[width=\linewidth]{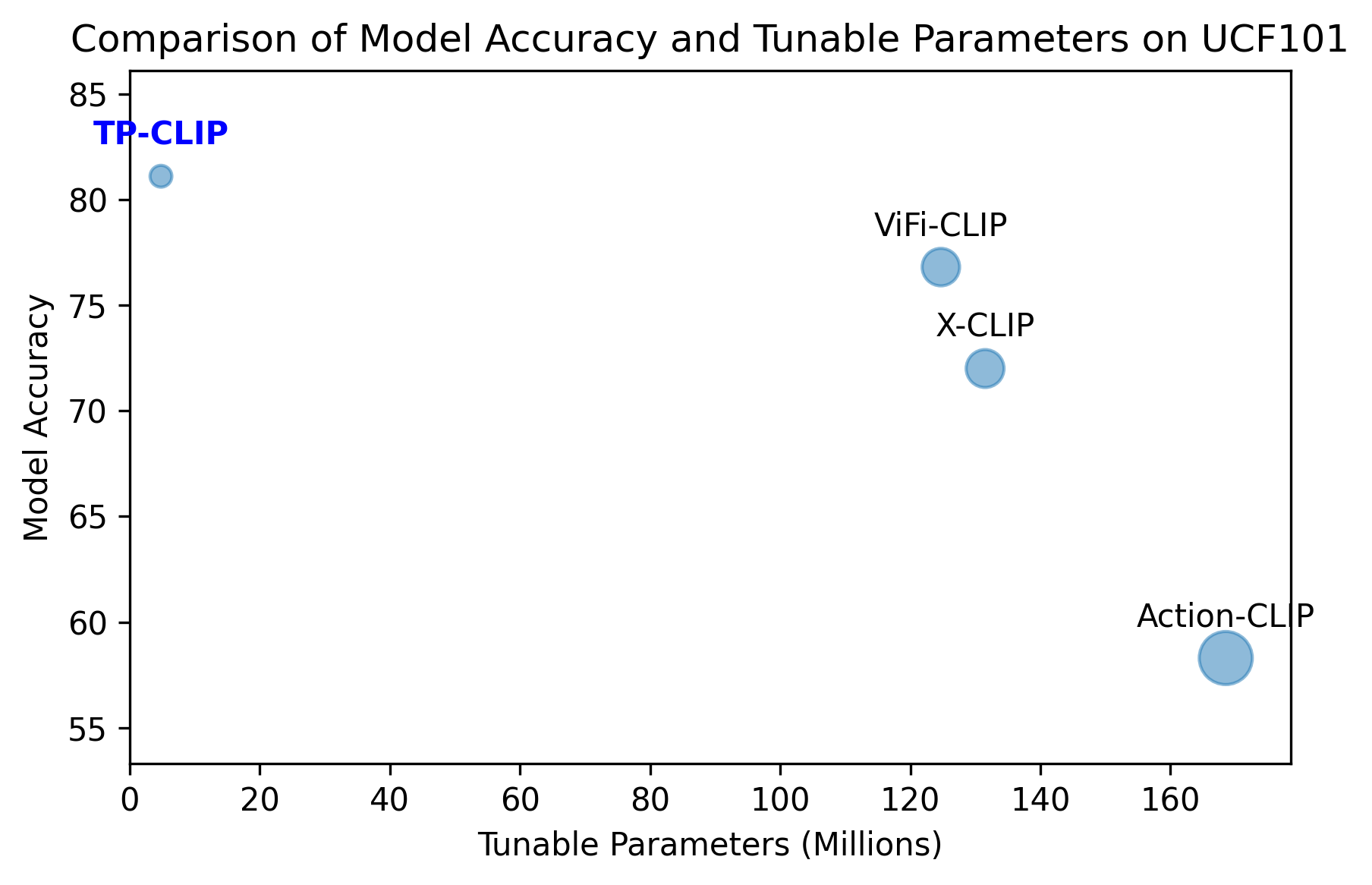}
\caption{Comparing CLIP-based models on UCF101: Our TP-CLIP (highlighted) excels with fewer parameters, surpassing recent SOTA models. Bubble size = model GFLOPS.}
    \label{fig:teaser}
\end{figure}

Such advancements have been further accelerated by the emergence of LLMs, such as ChatGPT, by integrating extensive knowledge and underpinning contextual comprehension. This integration facilitates sophisticated multimodal models, adept at processing intricate interactions between visual and textual data, akin to human-like understanding. Recent research explores methodologies to enhance these models' video understanding capabilities, leveraging their extensive training on multimodal data to handle diverse tasks effectively.

Nevertheless, temporal modeling remains a key challenge in video understanding, especially for adaptations of image-language models seeking to leverage existing pre-trained capabilities. Various adaptations for video, like adding self-attention layers or employing dedicated video models, have been proposed~\cite{luo2022clip4clip,xu2021videoclip,pan2022st,wasim2023vita}. Prompt learning has emerged as a recent approach~\cite{wasim2023vita,ju2022prompting}, yet it requires full fine-tuning, potentially compromising model generalization. Prompted by this and the surge in prompt learning research, we inquire: Can we maintain CLIP's generalization more efficiently? Specifically, can temporal prompting alone suffice to adapt CLIP for videos?

Introducing TP-CLIP, we propose a method that seamlessly integrates temporal visual prompting into the CLIP architecture, thereby maintaining its generalization prowess. This approach efficiently handles video data, mitigating the computational intensity and complexities associated with temporal modeling. In Figure~\ref{fig:teaser}, TP-CLIP is compared with other CLIP-based video adaptations, showcasing significant savings in tunable parameters and GFLOPs, alongside accuracy improvements. This method upholds CLIP's core strengths while expanding its capabilities to tackle video data. Across various datasets, our experiments demonstrate TP-CLIP's effectiveness in zero-shot and few-shot learning, surpassing existing approaches in terms of parameter efficiency and computational effectiveness.
\section{Related Work}

\paragraph{Action Recognition.} Action recognition has evolved significantly, starting from early works that primarily focused on handcrafted features and traditional classifiers~\cite{dalal2005histograms,wang2013action,wang2013dense,jain2013better}. Progressively, the field shifted towards deep learning methods, with Convolutional Neural Networks, Recurrent Neural Networks (RNNs) and Long Short-Term Memory networks becoming a staple for extracting spatial and temporal features from video frames~\cite{simonyan2014two,tran2018closer,wang2016temporal,i3d}. With the advancements in Transformers~\cite{transformer}, video-based transformer models have become increasingly popular~\cite{timesformer,vivit,videoswin,li2022uniformer}. However, these models are extremely expensive to train and whilst efficient variants exist, there is much scope for improvement.

\paragraph{Vision-Language Models.} The domain of vision-language models represents a fascinating intersection of computer vision and natural language processing. Early works in this area primarily focused on image captioning~\cite{xu2015show}, where the goal was to generate descriptive text for images. This evolved into more complex tasks like visual question answering (VQA), requiring the model to understand both visual content and language queries. Recent breakthroughs include multimodal transformers~\cite{clip,li2022blip,alayrac2022flamingo,singh2022flava}, which integrate visual and textual inputs in a unified framework, enabling more sophisticated interactions between visual and textual data excelling in tasks ranging from zero-shot image classification to creative image generation based on textual descriptions. These advancements underscore the growing capability of AI models to understand and generate content across visual and linguistic domains. Whilst these models excel in generalization abilities, they are trained on millions and sometimes billions of data points to obtain their high level performance. This is not something feasible for small research labs and companies. Efficiently adapting these models to tasks with minimal parameter costs is an important research direction.

\paragraph{Adapting Vision-Language Models for Video.} 
Adapting vision-language models (VLMs) for video tasks is a dynamic research domain, merging visual and linguistic advancements for video analysis. "Tem-Adapter"\cite{temadapter} focuses on video question answering, emphasizing temporal dynamics and complex semantics through novel aligner components. LAVILA\cite{lavila} repurposes Large Language Models for video-language representation learning, enhancing temporal synchronization and alignment diversity. CLIP4Clip~\cite{luo2022clip4clip} tailors CLIP for video-language retrieval tasks by encoding frames for feature sequences. VideoCLIP~\cite{xu2021videoclip} utilizes contrastive pre-training to unify video and text understanding. Other methods include spatio-temporal adapters~\cite{pan2022st}, cross-frame communication layers~\cite{ju2022prompting}, dynamic dilated convolutions~\cite{fe-adapter} and dedicated video encoder modules~\cite{ni2022expanding}. ViFi-CLIP~\cite{vificlip} fine-tunes CLIP for enhanced temporal understanding in various video learning scenarios. While promising, these methods introduce computational overhead and require full fine-tuning, potentially compromising the model's generalization capability. Most similar to our work is EZ-CLIP~\cite{ahmad2023ez}, which also adapts CLIP for temporal understanding in videos through prompt-based methods. Unlike EZ-CLIP which relies on a separate motion loss and MHA-based prompt processing, our TP-CLIP introduces a dedicated temporal encoder with 1D convolutional layers that directly incorporates temporal context into visual prompts while requiring fewer GFLOPs and tunable parameters.

\paragraph{Temporal Prompting.} 
Prompting efficiently utilizes pre-trained models for new tasks without retraining core parameters. It adds a few learnable tokens to the model's input, balancing generalization with task performance. Initially for NLP, prompting extends to computer vision, aiding vision-language models for video recognition. VPT~\cite{vpt} and UPT~\cite{upt} adjust minimal parameters for impressive task results. Prompt tuning achieves similar outcomes to full fine-tuning with fewer changes. IV-L~\cite{ivl} adapts CLIP for video with text prompts, potentially impacting generalization. Vita-CLIP~\cite{wasim2023vita} optimizes video classification with multimodal prompt learning, emphasizing zero-shot performance. However, it adds complexity with multiple prompting layers. We focus solely on temporal prompting for efficient CLIP performance without sacrificing generalization.

\paragraph{Limited Labeled Action Recognition.} Previous research focuses on establishing a shared embedding space between video features and semantic labels~\cite{xu2016multi,xu2017transductive}. More recent methods leverage out-of-distribution detectors~\cite{od} and graph neural networks~\cite{gao2019know}. Clustering-based techniques like CLASTER~\cite{claster} emphasize joint modeling of visual-semantic features, while methods like ReST~\cite{rest} and JigSawNet~\cite{jigsaw} bridge visual and text spaces through various mechanisms. With the rise of foundation models, image-based models have demonstrated excellent zero-shot capabilities when adapted to video tasks. These adaptations include multi-modal prompting~\cite{wasim2023vita} or self-regulating prompts~\cite{srp}. Semantic embedding enhancements, from simple word2vec to elaborate definitions like action ``Stories''~\cite{stories}, have shown great promise. For GZSL settings, approaches include using CVAEs to disentangle visual features\cite{chen2021semantics}, combining feature generation with contrastive embeddings~\cite{han2021contrastive}, and employing adversarial training with augmented samples~\cite{chen2023zero}. More recently, GIL~\cite{gowda2024continual} proposed ideas of continual learning to help foundational models overcome catastrophic forgetting. Unlike these methods that often require complex architectural modifications or extensive computational resources, our approach focuses on efficient temporal adaptation through visual prompting, maintaining CLIP's strong generalization abilities while significantly reducing parameter count and computational requirements.

\section{Methodology}

\begin{figure*}[t]
    \centering    \includegraphics[width=0.7\linewidth]{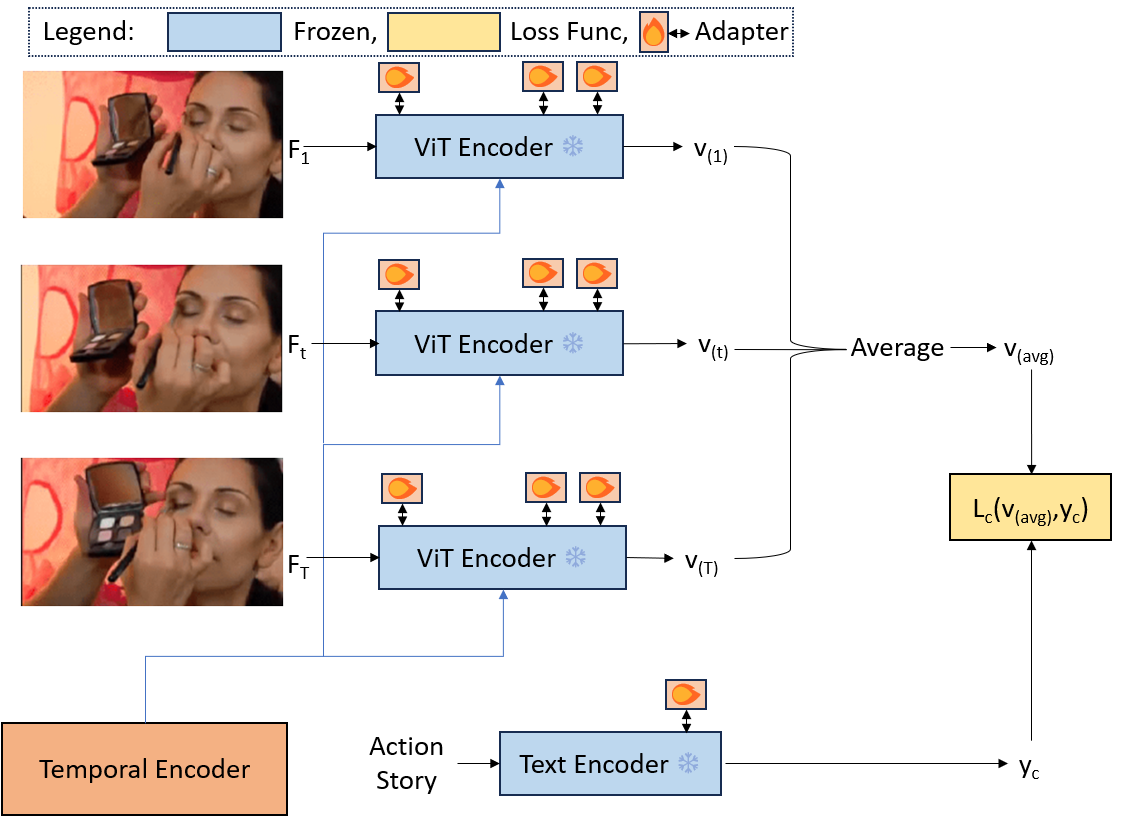}
\caption{We add temporal prompt learning to CLIP. The temporal visual prompts (using the Temporal Encoder) are instrumental in identifying inter-frame relationships, which aids in modeling the aspect of motion in videos. Additionally, adapter modules are utilized to modify spatial features, facilitating more effective temporal learning. }
    \label{fig:overview}
\end{figure*}

TP-CLIP, our proposed method, is designed to modify pretrained image-based vision-language models for video application. This is achieved through a straightforward temporal visual prompting strategy, which is intended to preserve the robust generalization abilities of CLIP without necessitating major architectural alterations to the model. A visual summary of this methodology is presented in Figure~\ref{fig:overview}. This enhancement significantly improves efficiency while maintaining computational effectiveness. We explain the model in more detail in the following subsections.

\subsection{Problem Statement}

Given a collection of video samples, we represent a video as $V = \{f(1), f(2), \ldots, f(T)\}$ $ \in $ $ \mathbb{R}^{T \times H \times W \times C}$ consisting of $T$ frames each with a resolution of $H \times W$ and $C = 3$ for RGB channels. Each video is associated with a text prompt $c$ which represents the target action class. We seek a model $M_v$ that can leverage visual encoders $E_{image}$ and text encoder $E_{text}$ from image-based pre-training and provide encodings for a video $V$ and its text pair $c$ which are similar to each other and dissimilar from prompts of other action classes.

\textbf{Few-Shot Learning:} In this scenario, the model's ability to learn with minimal supervision is crucial. Consider a dataset \( \mathcal{D_{S}} \) with labels \( \mathcal{Y_{S}} = \{y_i\}_{i=0}^k \). A K-shot dataset is constructed by randomly selecting \( K \) examples from each category \( y_i \in \mathcal{Y_{S}} \) for training purposes. We experiment with various values of \( K \), namely \( K = 2, 4, 8, \) and \( 16 \), in line with the methodology used in Vifi CLIP~\cite{wasim2023vita}. Evaluation is performed on the validation subset of \( \mathcal{D_{S}} \).

\textbf{Zero-Shot Learning:} In zero-shot learning, the model is evaluated on its capability to correctly infer information about unseen classes. Let \( \mathcal{Z_{S}} \) represent the set of unseen classes, where \( \mathcal{Z_{S}} \cap \mathcal{Y_{S}} = \emptyset \). The model, trained on \( \mathcal{Y_{S}} \), is tested on \( \mathcal{Z_{S}} \) to assess its predictive accuracy without prior exposure to these classes.

\textbf{Base-to-Novel Class Generalization:} To evaluate the model's generalization to new classes, we adopt the base-to-novel setting used in Vifi CLIP~\cite{vificlip}. The dataset \( D_{S} \) with labels \( Y_{S} = \{y_i\}_{i=0}^{k} \) is divided into base classes \( Y_{B} \) and novel classes \( Y_{N} \). This ensures \( Y_{B} \cup Y_{N} = Y_{S} \) and \( Y_{B} \cap Y_{N} = \emptyset \). The model is initially trained on the base classes \( Y_{B} \) and subsequently tested on both the base \( Y_{B} \) and novel classes \( Y_{N} \), to determine its ability to adapt to and learn from new class data.

\subsection{TP-CLIP}

\subsubsection{Overview}

A visual-language model utilizes two encoders: an image encoder \( E_{\text{image}} \) and a text encoder \( E_{\text{text}} \). These encoders process images and text respectively, and are trained on image-text pairs to maximize encoding similarity using a contrastive objective as outlined in CLIP~\cite{clip}. The contrastive loss function is defined as:

\begin{equation}
    \mathcal{L}_{con} = \sum_{i=1}^{N} -\log \frac{\exp(\text{sim}(e_i, y_i) / \tau)}{\sum_{j=1}^{N} \exp(\text{sim}(e_i, y_j) / \tau)}
\end{equation}

where \( \text{sim}(e, y) \) denotes similarity between encodings, \( \tau \) is the temperature parameter, and \( N \) the number of instances. For zero-shot tasks like image classification, encodings are compared to textual class descriptions, leveraging the model's capability to understand textually prompted classes. Ju et al~\cite{ivl} illustrate how such adaptability in visual-language models, including trainable prompts and adapters, allows significant model enhancements without altering core architecture.

In the context of video, the model processes a sequence of frames. Each frame is encoded by \( E_{\text{image}} \) to produce embeddings \(\{e(1), e(2), \ldots, e(T)\} \in \mathbb{R}^{T \times D}\), with \( e(t) \) representing the \( t \)-th frame's embedding. A naive way of proceeding is to use frame-wise contrastive learning whilst maintaining CLIP's architecture. We modify this in the proposed TP-CLIP. The frame embeddings are concatenated with learnable temporal visual prompts \( T_{E}(F_{1:T}) \) to form \( v(t) \), which assists in modeling motion by capturing inter-frame relations. Averaging these produces a video-level representation \( v_{\text{avg}} \in \mathbb{R}^D \). The model also includes trainable components such as temporal and spatial adapters, and text prompts, enhancing its robustness and generalization ability, while fundamental weights from CLIP remain frozen.

\subsubsection{Temporal Visual Prompting}

Temporal Visual Prompting (TVP) in TP-CLIP aims to extend the capabilities of static image-based visual-language models to effectively process the dynamic and temporal aspects inherent in video content.

\textbf{Design Principle:} The primary objective of TVP is to integrate temporal dynamics seamlessly into the pre-existing framework of CLIP-based models. This integration is achieved without altering the fundamental architecture. Temporal prompts, which encapsulate dynamic video features, are generated and then concatenated with the visual encodings of individual frames. This results in a composite representation that marries spatial and temporal information effectively.

\begin{equation} 
    v(t) = [E_{\text{image}}(F_t) ; T_E(F_{1:T})], \quad \forall t \in \{1, \ldots, T\}
\end{equation}

Here, $F_{1:T}$ denotes the sequence of frames, $E_{\text{image}}$ is the image encoder, $T_E$ represents the Temporal Encoder, and $v(t)$ is the combined spatial-temporal encoding for frame $t$.

\textbf{Temporal Encoding:} The Temporal Encoder $T_E$ processes the frame-wise features to produce a temporal context vector $T_{\text{context}}$, which captures the essence of motion and change over time.

The temporal information is synthesized using a 1D convolutional layer followed by a temporal embedding layer. The convolutional layer detects local temporal patterns, which the embedding layer then distills into a compact representation. Each frame is passed through the pre-trained image encoder of CLIP and concatenated to form a long vector representation. This vector representation is passed through the 1D convolutional layer, followed by a fully connected layer and a non-linearity.

\begin{equation}
    V_{\text{conv}} = \text{Conv1D}(E_{\text{image}}(F_{1}):E_{\text{image}}(F_{2})..:E_{\text{image}}(F_{T}) 
\end{equation}
\begin{equation}
    T_{\text{context}} = \text{ReLU}(\text{FC}(V_{\text{conv}}))
\end{equation}

\textbf{Integration with CLIP:} Integration of the temporal visual prompts into the CLIP model is executed such that each video frame's spatial encoding, generated by $E_{\text{image}}$, is enriched with the temporal context from $T_E$. This enriched encoding comprehensively represents both the spatial and temporal features of the video.

\textbf{Contrastive Learning with Temporal Context:} The contrastive learning framework of CLIP is adapted to include the temporal dimension. The modified loss function aligns these enriched video encodings with the corresponding textual descriptions that reflect both visual and temporal aspects of the content.

\begin{equation}
    \mathcal{L}_{c,t} = -\sum_{i=1}^{N} \log \frac{\exp(\text{sim}(v_{(i)_{avg}}, y_i) / \tau)}{\sum_{j=1}^{N} \exp(\text{sim}(v_{(i)_{avg}}, y_j) / \tau)}
\end{equation}

In this setup, $v_{(i)_{avg}}$ denotes the average encoding for video $i$, and $y_i$ represents the encoded textual description of the class~\cite{stories} corresponding to video class $i$. The parameter $\tau$ serves as a temperature control to fine-tune the softmax distribution.

\begin{figure}[t]
    \centering    \includegraphics[width=0.99\linewidth]{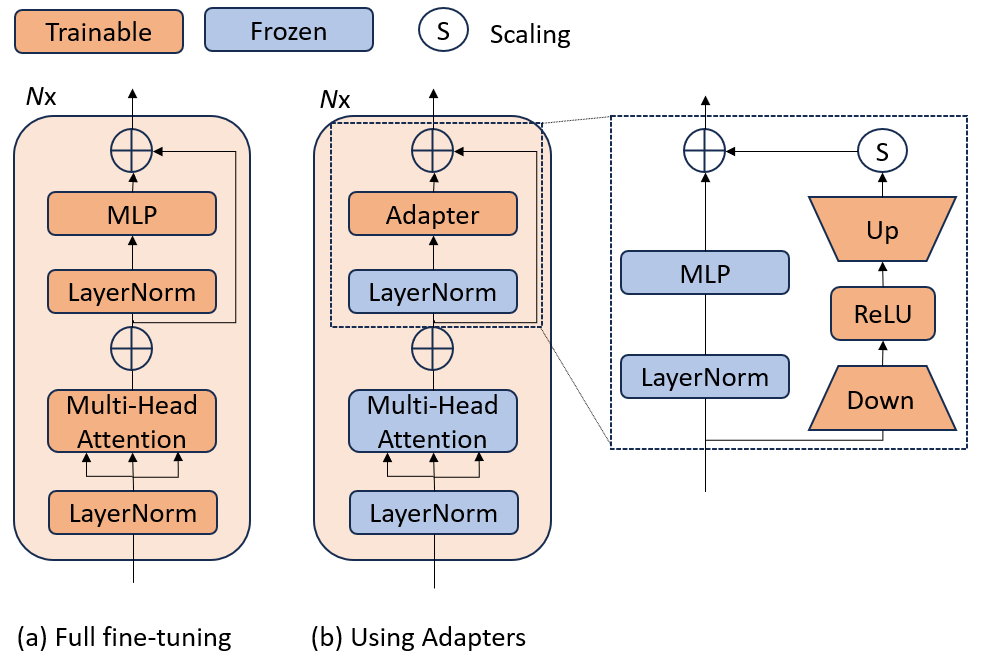}
\caption{Adapters ensure minimal parameter updates whilst keeping the generalization ability of the pre-trained model consistent. In comparison to (a) full fine-tuning, using (b) adapter modules significantly reduces tuneable parameter count.}
    \label{fig:adapters}
\end{figure}

\textbf{Tuneable Parameters:} The TP-CLIP architecture is designed to be highly efficient in terms of parameter usage. The model leverages tuneable adapters and the Temporal Encoder within its pipeline, allowing significant enhancements without extensive retraining or modification of the core CLIP architecture. Figure~\ref{fig:adapters} explain how adapter modules are integrated. These small layers ensure reduced parameter fine-tuning whilst keeping the original parameters frozen.

\section{Experimental Analysis}

\subsection{Datasets}

Our proposed method is tested on five video action recognition benchmarks: Kinetics-400~\cite{i3d}, Kinetics-600~\cite{kinetics600}, HMDB-51~\cite{hmdb}, UCF-101~\cite{ucf101}, and Something Something V2 (SSv2)~\cite{ssv2}. Kinetics-400, Kinetics-600, HMDB-51, and UCF-101 exhibit some appearance biases, with the background playing a significant role in action recognition. Conversely, SSv2 presents a more challenging scenario where temporal comprehension is essential.

\subsection{Evaluation Settings}

In the zero-shot evaluation scenario, models pretrained on the Kinetics-400 dataset are tested on three distinct datasets: HMDB-51, UCF-101, and Kinetics-600. The evaluation on HMDB-51 and UCF-101 is conducted over the three provided validation splits for each, with the results presented as the average top-1 accuracy. For Kinetics-600, models are evaluated on a subset of 220 action categories that are distinct from those in Kinetics-400, and the top-1 accuracy is noted. An 8-frame, single-view inference approach is employed during testing. To prevent any pre-training bias, only those classes from the 400 that do not have a counterpart in the evaluation datasets are utilized for pre-training, as suggested in TruZe~\cite{gowda2021new}.

In the few-shot learning framework, a standard K-shot division is formulated, utilizing K instances in alignment with the partitioning methodology established in Vifi CLIP~\cite{vificlip}. We conduct experiments with 2, 4, 8, and 16-shot configurations across three datasets: HMDB-51, UCF-101, and SSv2. Evaluation of the models is carried out on the initial validation split for both HMDB-51 and UCF-101, while the complete validation set is used for SSv2, which is known for its temporal complexity in the actions presented.

To effectively evaluate the generalization capabilities of different methodologies, we implement the base-to-novel generalization framework as described in Vifi CLIP~\cite{vificlip} for video action recognition tasks. In this setup, a model is initially trained on a set of base (seen) classes using a few-shot approach and is then tested on a distinct set of novel (unseen) classes. Our analysis extends across four datasets: Kinetics-400, HMDB-51, UCF-101, and SSv2. Each dataset is divided into three training splits, with the total categories evenly split into two groups. The base classes are formed from the most frequently appearing categories, while the less common ones are categorized as novel classes. The evaluation process utilizes an 8-frame, single-view inference methodology.

\subsection{Implementation Details}

In our study, we evaluated our model's generalization capabilities using not only the CLIP ViT-B/16 model but also the CLIP ViT-B/32 and ViT-H/14 architectures. Our methodology involves the sparse sampling of only 8 frames per video to maintain computational efficiency. The optimization process utilizes the AdamW optimizer, starting with a learning rate of  8e-6 and a weight decay of 0.001 spanning over 50 epochs. The training was conducted on a single NVIDIA A100 80GB GPU, accommodating a batch size of 64, and upheld an input frame resolution of 224 × 224. As for text input, we leveraged a pretrained 12-layer BERT model, specifically designed for the CLIP B/16 variant, which includes an embedding size of 512 and a context length of 77. For the text encoder's inputs, we utilized action descriptions generated by a Large Language Model (LLM), sourced from the dataset referred to as Stories~\cite{stories}.

\subsection{Dataset Details}

The Kinetics-400 and Kinetics-600 datasets feature 400 and 600 human action classes from YouTube videos, with approximately 240,000 and 410,000 training clips, respectively, and 20,000 and 29,000 validation clips. The HMDB-51 dataset includes 51 action categories from various sources, with 3,570 training and 1,530 validation videos, further divided into three splits. UCF-101 consists of 13,000 YouTube videos across 101 action categories, with a standard split of 9,537 training and 3,783 validation videos. Finally, the Something Something V2 (SSv2) dataset is an extensive collection of 174 categories of fine-grained actions involving human-object interactions, comprising 168,913 training and 24,777 validation videos, with a focus on top-1 accuracy over the validation split.

\subsection{Zero-shot Performance}

We first train TP-CLIP using 8 frames on the K400 training set, followed by zero-shot evaluations on UCF101, HMDB51, and K600. The pre-training of our model is on Kinetics-400 classes that don't overlap with the test datasets, as detailed in E2E~\cite{e2e}. We substitute class-specific context with tokenized class descriptions. Our zero-shot results on UCF101, HMDB51, and K600 are displayed in Table~\ref{tab:zsl} and Table~\ref{tab:zslk600}. As shown in Table~\ref{tab:zsl}, our method surpasses previous approaches by 1.4\% and 5.6\% on HMDB51 and UCF101, respectively. In Table~\ref{tab:zslk600}, we also set a new benchmark in zero-shot performance on K600, exceeding the prior best by 2.3\%. This success is attributed to our model's efficient learning of appearance and motion, reducing computational load and rendering the model more streamlined for training and deployment. Additionally, our model retains CLIP's generalization capabilities promoting further research into the idea of temporal prompting.

\begin{table}[htbp]
    \centering
    \begin{tabular}{lcc}
        \hline
        Method & HMDB-51 & UCF-101 \\
        \hline
        \multicolumn{3}{l}{\textit{Methods with Vision Training}} \\
        E2E~\cite{e2e} & 32.7 & 48 \\
        ER-ZSAR~\cite{er}  & 35.3 $\pm$ 4.6 & 51.8 $\pm$ 2.9 \\
        SPOT~\cite{spot} & 35.9 $\pm$ 2.5 & 40.9 $\pm$ 2.6 \\
        CLASTER~\cite{claster} & 42.6 $\pm$ 2.6 & 52.7 $\pm$ 2.2 \\
        JigSawNet~\cite{jigsaw}  & 39.3 $\pm$ 3.9 & 56.8 $\pm$ 2.8 \\
        \hline
        \multicolumn{3}{l}{\textit{Methods with Vision-Language Training}} \\
        A5~\cite{ivl}  & 44.3 $\pm$ 2.2 & 69.3 $\pm$ 4.2 \\
        X-CLIP~\cite{ni2022expanding}  & 44.6 $\pm$ 5.2 & 72.0 $\pm$ 2.3 \\
        X-Florence~\cite{ni2022expanding}  & 48.4 $\pm$ 4.9 & 73.2 $\pm$ 4.2 \\
        Vita-CLIP~\cite{wasim2023vita} & 48.6 $\pm$ 0.6 & 75.0 $\pm$ 0.6 \\
        SDR-CLIP~\cite{stories}  & 52.7 $\pm$ 3.4 & 75.5 $\pm$ 3.2 \\
        GIL~\cite{gowda2024continual} & 53.9 $\pm$ 1.4 & 79.4 $\pm$ 1.4 \\
        EZ-CLIP~\cite{ahmad2023ez} & 52.9 $\pm$ 1.6 & 79.4 $\pm$ 1.8 \\
        \textbf{TP-CLIP (Ours)} & \textbf{54.1 $\pm$ 1.2} & \textbf{81.1 $\pm$ 1.2} \\
        \hline
    \end{tabular}
    \caption{Comparison of zero-shot performances on HMDB51 and UCF101 against state-of-the-art.}
    \label{tab:zsl}
\end{table}

\begin{table}[htbp]
    \centering
    \begin{tabular}{lcc}
        \hline
        Method & Top-1  & Top-5\\
        \hline
        \multicolumn{3}{l}{\textit{Methods with Vision Training}} \\
        ER-ZSAR~\cite{er}  & 42.1 $\pm$ 1.4 & 73.1 $\pm$ 0.3 \\
        JigSawNet~\cite{jigsaw} & 45.9 $\pm$ 1.6 & 78.8 $\pm$ 1.0 \\
        \hline
        \multicolumn{3}{l}{\textit{Methods with Vision-Language Training}} \\
        A5~\cite{ju2022prompting} & 58.4 $\pm$ 1.1 & 82.6 $\pm$ 1.6 \\
        X-CLIP~\cite{ni2022expanding}  & 65.2 $\pm$ 0.4 & 86.1 $\pm$ 0.8 \\
        X-Florence~\cite{ni2022expanding} & 68.8 $\pm$ 0.9 & 88.4 $\pm$ 0.6 \\
        Vita-CLIP~\cite{wasim2023vita}  & 67.4 $\pm$ 0.5 & 86.9 $\pm$ 0.6 \\
        SDR+CLIP~\cite{stories}  & 55.1  $\pm$ 2.2 & 86.1 $\pm$ 3.1  \\
        \textbf{TP-CLIP (Ours)} & \textbf{71.1 $\pm$ 1.2} & \textbf{92.4 $\pm$ 1.1} \\
        \hline
    \end{tabular}
    \caption{Comparison of zero-shot performances on Kinetics200 against state-of-the-art.}
    \label{tab:zslk600}
\end{table}

\subsection{Few-shot Performance}

In the few-shot learning scenario, our focus is on the model's ability to adapt with minimal supervision. Consider a dataset $\mathcal{D_{S}}$ with a set of labels $\mathcal{Y_{S}} = \{y_i\}_{i=0}^k$. We construct a general K-shot dataset by randomly selecting $K$ examples from each category $y_i \in \mathcal{Y_{S}}$ for training purposes. Our experiments are conducted with various values of $K$, namely $K = 2, 4, 8,$ and $16$, aligning our samples with those used in Vita CLIP~\cite{wasim2023vita}. The evaluation is carried out on the validation subset of $\mathcal{D_{S}}$. Table~\ref{tab:few-shot} presents the performance of TP-CLIP in comparison to other methods that adapt CLIP for video tasks. Notably, although TP-CLIP does not demonstrate substantial performance enhancements due to its lower count of tunable parameters, it consistently shows robust performance across different shot configurations, outperforming other methods in all scenarios by up to 2.6\%.

\begin{table*}[]
    \centering
    \centering
\resizebox{\textwidth}{!}{
\begin{tabular}{ccccccccccccc}
\hline
 \multirow[b]{2}{*}{ Method } & \multicolumn{4}{c}{ HMDB-51 } & \multicolumn{4}{c}{ UCF-101 } & \multicolumn{4}{c}{ SSv2 } \\
 & $\mathrm{K}=2$ & $\mathrm{~K}=4$ & $\mathrm{~K}=8$ & $\mathrm{~K}=16$ & $\mathrm{~K}=2$ & $\mathrm{~K}=4$ & $\mathrm{~K}=8$ & $\mathrm{~K}=16$ & $\mathrm{~K}=2$ & $\mathrm{~K}=4$ & $\mathrm{~K}=8$ & $\mathrm{~K}=16$ \\

\hline ActionCLIP~\cite{wang2021actionclip} & 47.5 & 57.9 & 57.3 & 59.1 & 70.6 & 71.5 & 73.0 & 91.4 & 4.1 & 5.8 & 8.4 & 11.1 \\
XCLIP~\cite{ni2022expanding} & 53.0 & 57.3 & 62.8 & 62.4 & 71.4 & 79.9 & 83.7 & 91.4 & 3.9 & 4.5 & 6.8 & 10.0 \\
 A5~\cite{ju2022prompting} & 39.7 & 50.7 & 57.0 & 62.4 & 71.4 & 79.9 & 85.7 & 89.9 & 4.4 & 5.1 & 6.1 & 9.7 \\
 ViFi CLIP~\cite{vificlip} & 57.2 & 62.7 & 64.5 & 66.8 & 80.7 & 85.1 & 90.0 & 92.7 & 6.2 & 7.4 & 8.5 & 12.4 \\
 EZ-CLIP~\cite{ahmad2023ez} & 57.3 & 61.1 & 65.4 & 67.7 & 84.7 & 88.3 & 91.3 & 92.8 & 6.8 & 8.7 & 9.6 & 13.2 \\
 \textbf{TP-CLIP (Ours)} & \textbf{58.2} & \textbf{63.4} & \textbf{66.6} & \textbf{68.1} & \textbf{83.3} & \textbf{87.1} & \textbf{91.1} & \textbf{93.1} & \textbf{7.8} & \textbf{9.3} & \textbf{10.5} & \textbf{13.6} \\
\hline
\end{tabular}}
    \caption{Comparing TP-CLIP to current methods on HMDB-51, UCF-101, and SSv2 datasets in various few-shot scenarios (K = 2, 4, 6, 8 shots), using top-1 accuracy for assessment. TP-CLIP shows strong generalization skills and enhanced performance in most tests, despite having fewer adjustable parameters.}
    \label{tab:few-shot}
\end{table*}

\subsection{Base-to-Novel Performance}

\begin{table*}[]
    \centering
    \centering
\resizebox{\textwidth}{!}{
\begin{tabular}{lcccccccccccc}
\hline \multirow{2}{*}{ Method } & \multicolumn{3}{c}{ Kinetics-400 } & \multicolumn{3}{c}{ HMDB-51 } & \multicolumn{3}{c}{ UCF-101 } & \multicolumn{3}{c}{ SSv2 } \\
 & Base & Novel & HM & Base & Novel & HM & Base & Novel & HM & Base & Novel & HM \\
\hline ActionCLIP~\cite{wang2021actionclip} & 61.0 & 46.2 & 52.6 & 69.1 & 37.3 & 48.5 & 90.1 & 58.1 & 70.7 & 13.3 & 10.1 & 11.5 \\
 XCLIP~\cite{ni2022expanding} & 74.1 & 56.4 & 64.0 & 69.4 & 45.5 & 55.0 & 89.9 & 58.9 & 71.2 & 8.5 & 6.6 & 7.4 \\
 A5~\cite{ju2022prompting} & 69.7 & 37.6 & 48.8 & 46.2 & 16.0 & 23.8 & 90.5 & 40.4 & 55.8 & 8.3 & 5.3 & 6.4 \\
 ViFi CLIP~\cite{vificlip} & 76.4 & 61.1 & 67.9 & 73.8 & 53.3 & 61.9 & 92.9 & 67.7 & 78.3 & 16.2 & 12.1 & 13.9 \\
EZ-CLIP~\cite{ahmad2023ez} & \textbf{81.2} & 60.0 & 69.0 & \textbf{78.5} & \textbf{57.8} & \textbf{66.5} & 95.6 & \textbf{76.4} & \textbf{84.9} & 45.1 & 18.5 & 26.2 \\
 \textbf{TP-CLIP (Ours)} & 81.1 & \textbf{62.7} & \textbf{70.1} & 77.7 & 57.1 & 65.8 & \textbf{96.2} & 71.4 & 82.0 & \textbf{53.9} & \textbf{20.5} & \textbf{29.7}\\
 \hline
  EZ-CLIP~\cite{ahmad2023ez} & 83.7 & 62.5 & 71.5 & 79.8 & \textbf{62.2} & 69.9 & 95.9 & 76.5 & 85.1 & 54.0 & 20.6 & 29.8 \\
  \textbf{TP-CLIP (Ours)} & \textbf{84.1} & \textbf{62.8} & \textbf{71.8} & \textbf{80.3} & 61.8 & \textbf{70.1} & \textbf{95.9} & \textbf{76.7} & \textbf{85.2} & \textbf{56.6} & \textbf{23.2} & \textbf{32.4} \\
\hline
\end{tabular}}
    \caption{Generalization from Base to Novel Classes: We conduct evaluations across four varied datasets - Kinetics400, HMDB-51, UCF-101, and SSv2, assessing models using top-1 accuracy. The harmonic mean (HM) is used to measure the balance between base and novel class performances. The last 2 results of EZ-CLIP and ours use the same motion loss in EZ-CLIP~\cite{ahmad2023ez}}
    \label{tab:base-novel}
\end{table*}

To investigate the adaptability of TP-CLIP to new classes, we follow the base-to-novel setting used in Vita CLIP~\cite{vificlip} for evaluation. Consider a dataset $D_{S}$ having labels \( Y_{S} = \{y_i\}_{i=0}^{k} \), which is divided into two segments: base classes \( Y_{B} \) and novel classes \( Y_{N} \). This division guarantees that \( Y_{B} \cup Y_{N} = Y_{S} \) and \( Y_{B} \cap Y_{N} = \emptyset \). The model undergoes training on base classes and is then tested on both base and novel classes. Table~\ref{tab:base-novel} illustrates the evaluation on four varied datasets: K-400, HMDB-51, UCF-101, and SSv2. TP-CLIP surpasses other models, demonstrating notable enhancements across these datasets. Enhancements up to 37.7\% in base classes and 8.3\% in novel classes are observed consistently across all datasets.

\subsection{Generalized Zero-shot Performance}

For the sake of complete evaluation, we also consider the case of generalized zero-shot action recognition. This is a more challenging domain where at test time, classes from both seen and unseen are evaluated. Very limited methods evaluate on this setting. However, we do this to show how much the proposed method improves on current state-of-the-art.

\begin{table}[htbp]

\centering
\begin{tabular}{lcc}
\toprule
 & HMDB-51 & UCF-101 \\
\midrule
WGAN~\cite{clswgan}  & 32.7 $\pm$ 3.4  & 44.4 $\pm$ 3.0 \\
OD~\cite{od}  & 36.1 $\pm$ 2.2  & 49.4 $\pm$ 2.4 \\
CLASTER~\cite{claster}~\cite{er}  & 50.8 $\pm$ 2.8  & 52.8 $\pm$ 2.1 \\
\textbf{TP-CLIP (Ours)}  & \textbf{55.1 $\pm$ 1.9} & \textbf{68.5 $\pm$ 1.2} \\
\bottomrule
\end{tabular}
\caption{Comparing TP-CLIP to SOTA in the generalized zero-shot setting on HMDB-51 and UCF-101. Reported values show the harmonic mean of the seen and unseen accuracies over 10 runs, $\pm$ $std$.}
\label{tab:gzsl}
\end{table}

In order to better analyze performance of the model on GZSL, we report the average seen and unseen accuracies along with their harmonic mean. The results on the UCF101~\cite{ucf101} and HMDB51~\cite{hmdb} datasets are reported in Table~\ref{tab:gzsl_vid}. Results are averaged from 10 runs, where for each run we create a different random train/test split (which is the same for all models).

\begin{table}[htb]
\begin{center}
\resizebox{0.45\textwidth}{!}{%
\begin{tabular}{ *{7}{c} }
\toprule
Model   & \multicolumn{3}{c}{HMDB51}& \multicolumn{3}{c}{UCF-101}\\
\midrule
&  u & s & H & u & s & H \\
\midrule
WGAN \cite{clswgan}  & 23.1 & 55.1 & 32.5 & 20.6 & 73.9 & 32.2\\
OD \cite{od}  & 25.9 & 55.8 & 35.4 & 25.3 & 74.1 & 37.7\\
CLASTER \cite{claster} & 43.7 & 53.3 & 48.0 & 40.8 & 69.3 & 51.3\\
\textbf{TP-CLIP} (Ours) & \textbf{52.8} & \textbf{57.8} & \textbf{55.1} & \textbf{59.2} & \textbf{80.1} & \textbf{68.5} \\
\bottomrule
\end{tabular}
}
\caption{Generalized Zero-Shot results, where `u', `s' and `H' correspond to average unseen accuracy, average seen accuracy and the harmonic mean of the two. All the reported results are on the same splits.} \label{tab:gzsl_vid}
\end{center}
\end{table}

\subsection{Comparing Efficiency Parameters}

We examine the computational complexity of different approaches as shown in Table~\ref{tab:efficiency}. Current CLIP adaptation techniques display reduced throughput, due to the use of video-specific learnable components to the standard CLIP model. TP-CLIP, on the other hand, tackles this efficiency issue effectively. It employs temporal visual prompts, consisting of learnable inputs integrated into the input space, to maintain temporal consistency. Consequently, TP-CLIP achieves greater efficiency in throughput while preserving similar FLOPs (Floating Point Operations Per Second) compared to earlier methods.

\begin{table*}
\centering
\begin{tabular}{lcccc}
\hline Method & GFLOPs $\downarrow$ & TP $\uparrow$ & Param (M) $\downarrow$ & T Param (M) $\downarrow$ \\
\hline ActionCLIP~\cite{wang2021actionclip} & 563 & 67.7 & 168.5 & 168.5 \\
XCLIP~\cite{ni2022expanding} & 287 & 58.5 & 131.5 & 131.5 \\
 ViFi CLIP~\cite{vificlip} & 281 & 71.1 & 124.7 & 124.7 \\
 EZ-CLIP~\cite{ahmad2023ez} & 102 & 322.4 & 87.5 & 5.2 \\
 \textbf{TP-CLIP (Ours)} & \textbf{94} & \textbf{337.1} & \textbf{81.2} & \textbf{4.4} \\
\hline
\end{tabular}
\caption{Evaluating TP-CLIP's efficiency against current methods. We measure throughput per view (TP) on a single A100 GPU. TP-CLIP excels in GFLOPs, throughput, and parameter count, showcasing its computational benefits. 'T Param (M)' refers to the number of tunable parameters.}
\label{tab:efficiency}
\end{table*}

\subsection{Results on TruZe~\cite{gowda2021new}}

We also evaluate on the more challenging TruZe split. Whilst our pre-training dataset does not have any overlapping classes with any of the testing datasets, we run on this split to show even higher performance gains than in the main paper. The proposed UCF101 and HMDB51 splits have 70/31 and 29/22 classes (represented as training/testing). We compare to WGAN \cite{clswgan}, OD \cite{od} and E2E \cite{e2e} on both ZSL and GZSL scenarios. Results are shown in Table~\ref{tbl:truze}.


\begin{table}[htb]
\small
\begin{center}
\begin{tabular}{ *{4}{c|}c }
\hline
Method & \multicolumn{2}{c|}{UCF101 } & \multicolumn{2}{c}{HMDB51}\\
 & ZSL & GZSL & ZSL & GZSL \\
 \hline\hline
WGAN~\cite{clswgan} & 22.5 & 36.3 & 21.1 & 31.8 \\
OD~\cite{od} & 22.9 & 42.4 & 21.7 & 35.5 \\
E2E~\cite{e2e} & 45.5 & 45.9 & 31.5 & 38.9 \\
SPOT~\cite{spot} & 25.5 & 44.1 & 24.0 & 37.1 \\
\textbf{TP-CLIP} & \textbf{79.6} & \textbf{80.1} & \textbf{48.6} & \textbf{51.8} \\
\hline
\end{tabular}
\end{center}
\caption{Results on TruZe~\cite{gowda2021new} For ZSL, we report the mean class accuracy and for GZSL, we report the harmonic mean of seen and unseen class accuracies. All approaches use sen2vec annotations as the form of semantic embedding and not Stories, for fair comparison. }
\label{tbl:truze}
\end{table}

\subsection{Ablation Study}

\subsubsection{Different Backbones} Since we use a CLIP-based model, we have the option of using different visual transformer models~\cite{vaswani2017attention} as the visual encoder. We test with the ViT B/16, ViT B/32, ViT L/16 and ViT H/14 models and report these results in Table~\ref{tab:ablation_backbone}. We see that the stronger the backbone, the improved performance of the model and this is expected. Based on this, we stick to using ViT-H in all our other experiments. 

\begin{table}[htbp]
    \centering
    \begin{tabular}{lcc}
        \hline
        Method & HMDB-51 & UCF-101 \\
        \hline
        ViT-B/32 & 49.1 $\pm$ 1.4 & 74.5 $\pm$ 1.1 \\
        ViT-B/16 & 52.1 $\pm$ 1.1 & 77.1 $\pm$ 0.9 \\
        ViT-L/16 & 53.7 $\pm$ 1.2 & 79.4 $\pm$ 0.7 \\
        ViT-H/14 & \textbf{54.1 $\pm$ 1.2} & \textbf{81.1 $\pm$ 1.2} \\
        \hline
    \end{tabular}
    \caption{Comparison of zero-shot performances on HMDB51 and UCF101 using different visual encoders.}
    \label{tab:ablation_backbone}
\end{table}

\subsubsection{Number of Frames} Video models typically benefit from using more frames~\cite{vivit}. However, this adds more computational costs. One of the motivations for TP-CLIP is the need to reduce computational costs and with this in mind we hope that a low cost temporal encoder can help learn information that reduces the cost of processing more frames. Recent state-of-the-art models like Vifi CLIP use 32 frames as input. We see that beyond 8 frames, we do not notice much improvements in accuracy, however the cost of training the model increases exponentially. Hence, we stick to using 8 frames for all experiments. We also compare the number of frames being used by other models in the results. All results can be seen in Table~\ref{tab:ablation_frame}. We would also like to point out that with just 4 frames, we are competing with models that use 32 frames and outperform most of them.

\begin{table}[htbp]
    \centering
    \begin{tabular}{lccc}
        \hline
        Method & \# Frames & HMDB-51 & UCF-101 \\
        \hline
        TP-CLIP & 4 & 52.5 $\pm$ 1.1 & 77.1 $\pm$ 1.2 \\
        TP-CLIP & 8 & 54.1 $\pm$ 1.2 & 81.1 $\pm$ 1.2 \\
        TP-CLIP & 16 & 54.2 $\pm$ 1.0 & 81.0 $\pm$ 1.1 \\
        TP-CLIP & 32 & 54.4 $\pm$ 0.9 & 81.2 $\pm$ 0.8 \\
        \hline
        A5 & 32 & 44.3 $\pm$ 2.2 & 69.3 $\pm$ 4.2 \\
        X-CLIP & 32 & 44.6 $\pm$ 5.2 & 72.0 $\pm$ 2.3 \\
        Vita-CLIP & 32 & 48.6 $\pm$ 0.6 & 75.0 $\pm$ 0.6 \\
        SDR-CLIP & 32 & 52.7 $\pm$ 3.4 & 75.5 $\pm$ 3.2 \\
        \hline
    \end{tabular}
    \caption{Comparison of zero-shot performances on HMDB51 and UCF101 using different visual encoders.}
    \label{tab:ablation_frame}
\end{table}

\section{Conclusion}

In conclusion, the field of video understanding has experienced significant advancements, largely fueled by large-scale labeled datasets. The recent development of visual-language models has particularly contributed to remarkable progress in zero-shot tasks, reducing reliance on labeled data. However, adaptations of these models for video content have been challenged by computational demands and difficulties in temporal modeling, often requiring architectural changes to accommodate video data. Our contribution, TP-CLIP, addresses these challenges by introducing an innovative adaptation of the CLIP model. By employing temporal visual prompting, TP-CLIP adapts to temporal aspects without altering the fundamental architecture of CLIP, thus maintaining its strong generalization capabilities. This integration allows TP-CLIP to harness the pre-trained strengths of CLIP for video data effectively.
Through comprehensive experiments on various datasets, TP-CLIP has proven its superior performance in both zero-shot and few-shot learning scenarios. Notably, it achieves this with remarkable computational efficiency, using only 1/3 the GFLOPs and 1/28 the number of tunable parameters compared to recent state-of-the-art models. Impressively, TP-CLIP not only matches but exceeds these models by up to 15.8\%. These results underscore the efficacy of TP-CLIP as a robust, efficient solution for video understanding, setting a new benchmark in the field.
{
    \small
    \bibliographystyle{ieeenat_fullname}
    \bibliography{main}

\begin{thebibliography}{63}
\providecommand{\natexlab}[1]{#1}
\providecommand{\url}[1]{\texttt{#1}}
\expandafter\ifx\csname urlstyle\endcsname\relax
  \providecommand{\doi}[1]{doi: #1}\else
  \providecommand{\doi}{doi: \begingroup \urlstyle{rm}\Url}\fi

\bibitem[Ahmad et~al.(2023)Ahmad, Chanda, and Rawat]{ahmad2023ez}
Shahzad Ahmad, Sukalpa Chanda, and Yogesh~S Rawat.
\newblock Ez-clip: Efficient zeroshot video action recognition.
\newblock \emph{arXiv preprint arXiv:2312.08010}, 2023.

\bibitem[Alayrac et~al.(2022)Alayrac, Donahue, Luc, Miech, Barr, Hasson, Lenc, Mensch, Millican, Reynolds, et~al.]{alayrac2022flamingo}
Jean-Baptiste Alayrac, Jeff Donahue, Pauline Luc, Antoine Miech, Iain Barr, Yana Hasson, Karel Lenc, Arthur Mensch, Katherine Millican, Malcolm Reynolds, et~al.
\newblock Flamingo: a visual language model for few-shot learning.
\newblock \emph{Advances in Neural Information Processing Systems}, 35:\penalty0 23716--23736, 2022.

\bibitem[Arnab et~al.(2021)Arnab, Dehghani, Heigold, Sun, Lu{\v{c}}i{\'c}, and Schmid]{vivit}
Anurag Arnab, Mostafa Dehghani, Georg Heigold, Chen Sun, Mario Lu{\v{c}}i{\'c}, and Cordelia Schmid.
\newblock Vivit: A video vision transformer.
\newblock In \emph{Proceedings of the IEEE/CVF international conference on computer vision}, pages 6836--6846, 2021.

\bibitem[Bertasius et~al.(2021)Bertasius, Wang, and Torresani]{timesformer}
Gedas Bertasius, Heng Wang, and Lorenzo Torresani.
\newblock Is space-time attention all you need for video understanding?
\newblock In \emph{ICML}, page~4, 2021.

\bibitem[Brattoli et~al.(2020)Brattoli, Tighe, Zhdanov, Perona, and Chalupka]{e2e}
Biagio Brattoli, Joseph Tighe, Fedor Zhdanov, Pietro Perona, and Krzysztof Chalupka.
\newblock Rethinking zero-shot video classification: End-to-end training for realistic applications.
\newblock In \emph{Proceedings of the IEEE/CVF Conference on Computer Vision and Pattern Recognition}, pages 4613--4623, 2020.

\bibitem[Carreira and Zisserman(2017)]{i3d}
J. Carreira and Andrew Zisserman.
\newblock Quo vadis, action recognition? a new model and the kinetics dataset.
\newblock 2017.

\bibitem[Carreira et~al.(2018)Carreira, Noland, Banki-Horvath, Hillier, and Zisserman]{kinetics600}
Joao Carreira, Eric Noland, Andras Banki-Horvath, Chloe Hillier, and Andrew Zisserman.
\newblock A short note about kinetics-600.
\newblock \emph{arXiv preprint arXiv:1808.01340}, 2018.

\bibitem[Chen et~al.(2023{\natexlab{a}})Chen, Liu, Wang, Zhang, Torr, Zhang, and Tang]{temadapter}
Guangyi Chen, Xiao Liu, Guangrun Wang, Kun Zhang, Philip~HS Torr, Xiao-Ping Zhang, and Yansong Tang.
\newblock Tem-adapter: Adapting image-text pretraining for video question answer.
\newblock In \emph{Proceedings of the IEEE/CVF International Conference on Computer Vision}, pages 13945--13955, 2023{\natexlab{a}}.

\bibitem[Chen and Huang(2021)]{er}
Shizhe Chen and Dong Huang.
\newblock Elaborative rehearsal for zero-shot action recognition.
\newblock In \emph{Proceedings of the IEEE/CVF International Conference on Computer Vision}, pages 13638--13647, 2021.

\bibitem[Chen et~al.(2021)Chen, Luo, Qiu, Wang, Huang, Li, and Zhang]{chen2021semantics}
Zhi Chen, Yadan Luo, Ruihong Qiu, Sen Wang, Zi Huang, Jingjing Li, and Zheng Zhang.
\newblock Semantics disentangling for generalized zero-shot learning.
\newblock In \emph{Proceedings of the IEEE/CVF international conference on computer vision}, 2021.

\bibitem[Chen et~al.(2023{\natexlab{b}})Chen, Zhang, Li, Wang, and Huang]{chen2023zero}
Zhi Chen, Pengfei Zhang, Jingjing Li, Sen Wang, and Zi Huang.
\newblock Zero-shot learning by harnessing adversarial samples.
\newblock In \emph{Proceedings of the 31st ACM International Conference on Multimedia}, 2023{\natexlab{b}}.

\bibitem[Dalal and Triggs(2005)]{dalal2005histograms}
Navneet Dalal and Bill Triggs.
\newblock Histograms of oriented gradients for human detection.
\newblock In \emph{2005 IEEE computer society conference on computer vision and pattern recognition (CVPR'05)}, pages 886--893. Ieee, 2005.

\bibitem[Gao et~al.(2019)Gao, Zhang, and Xu]{gao2019know}
Junyu Gao, Tianzhu Zhang, and Changsheng Xu.
\newblock I know the relationships: Zero-shot action recognition via two-stream graph convolutional networks and knowledge graphs.
\newblock In \emph{Proceedings of the AAAI Conference on Artificial Intelligence}, 2019.

\bibitem[Gowda(2023)]{spot}
Shreyank~N Gowda.
\newblock Synthetic sample selection for generalized zero-shot learning.
\newblock In \emph{Proceedings of the IEEE/CVF conference on computer vision and pattern recognition}, pages 58--67, 2023.

\bibitem[Gowda and Sevilla-Lara(2024)]{stories}
Shreyank~N Gowda and Laura Sevilla-Lara.
\newblock Telling stories for common sense zero-shot action recognition.
\newblock In \emph{Proceedings of the Asian Conference on Computer Vision}, pages 4577--4594, 2024.

\bibitem[Gowda et~al.(2021{\natexlab{a}})Gowda, Rohrbach, and Sevilla-Lara]{gowda2021smart}
Shreyank~N Gowda, Marcus Rohrbach, and Laura Sevilla-Lara.
\newblock Smart frame selection for action recognition.
\newblock In \emph{Proceedings of the AAAI Conference on Artificial Intelligence}, pages 1451--1459, 2021{\natexlab{a}}.

\bibitem[Gowda et~al.(2021{\natexlab{b}})Gowda, Sevilla-Lara, Kim, Keller, and Rohrbach]{gowda2021new}
Shreyank~N Gowda, Laura Sevilla-Lara, Kiyoon Kim, Frank Keller, and Marcus Rohrbach.
\newblock A new split for evaluating true zero-shot action recognition.
\newblock In \emph{DAGM German Conference on Pattern Recognition}, pages 191--205. Springer, 2021{\natexlab{b}}.

\bibitem[Gowda et~al.(2022)Gowda, Sevilla-Lara, Keller, and Rohrbach]{claster}
Shreyank~N Gowda, Laura Sevilla-Lara, Frank Keller, and Marcus Rohrbach.
\newblock Claster: clustering with reinforcement learning for zero-shot action recognition.
\newblock In \emph{European Conference on Computer Vision}, pages 187--203. Springer, 2022.

\bibitem[Gowda et~al.(2024{\natexlab{a}})Gowda, Gao, and Clifton]{fe-adapter}
Shreyank~N Gowda, Boyan Gao, and David~A Clifton.
\newblock Fe-adapter: Adapting image-based emotion classifiers to videos.
\newblock In \emph{2024 IEEE 18th International Conference on Automatic Face and Gesture Recognition (FG)}, pages 1--6. IEEE, 2024{\natexlab{a}}.

\bibitem[Gowda et~al.(2024{\natexlab{b}})Gowda, Moltisanti, and Sevilla-Lara]{gowda2024continual}
Shreyank~N Gowda, Davide Moltisanti, and Laura Sevilla-Lara.
\newblock Continual learning improves zero-shot action recognition.
\newblock In \emph{Proceedings of the Asian Conference on Computer Vision}, pages 3239--3256, 2024{\natexlab{b}}.

\bibitem[Goyal et~al.(2017)Goyal, Ebrahimi~Kahou, Michalski, Materzynska, Westphal, Kim, Haenel, Fruend, Yianilos, Mueller-Freitag, et~al.]{ssv2}
Raghav Goyal, Samira Ebrahimi~Kahou, Vincent Michalski, Joanna Materzynska, Susanne Westphal, Heuna Kim, Valentin Haenel, Ingo Fruend, Peter Yianilos, Moritz Mueller-Freitag, et~al.
\newblock The" something something" video database for learning and evaluating visual common sense.
\newblock In \emph{Proceedings of the IEEE international conference on computer vision}, pages 5842--5850, 2017.

\bibitem[Han et~al.(2021)Han, Fu, Chen, and Yang]{han2021contrastive}
Zongyan Han, Zhenyong Fu, Shuo Chen, and Jian Yang.
\newblock Contrastive embedding for generalized zero-shot learning.
\newblock In \emph{Proceedings of the IEEE/CVF conference on computer vision and pattern recognition}, 2021.

\bibitem[Jain et~al.(2013)Jain, J{\'e}gou, and Bouthemy]{jain2013better}
Mihir Jain, Herv{\'e} J{\'e}gou, and Patrick Bouthemy.
\newblock Better exploiting motion for better action recognition.
\newblock In \emph{Proceedings of the IEEE conference on computer vision and pattern recognition}, pages 2555--2562, 2013.

\bibitem[Jia et~al.(2022)Jia, Tang, Chen, Cardie, Belongie, Hariharan, and Lim]{vpt}
Menglin Jia, Luming Tang, Bor-Chun Chen, Claire Cardie, Serge Belongie, Bharath Hariharan, and Ser-Nam Lim.
\newblock Visual prompt tuning.
\newblock In \emph{European Conference on Computer Vision}, pages 709--727. Springer, 2022.

\bibitem[Ju et~al.(2022{\natexlab{a}})Ju, Han, Zheng, Zhang, and Xie]{ivl}
Chen Ju, Tengda Han, Kunhao Zheng, Ya Zhang, and Weidi Xie.
\newblock Prompting visual-language models for efficient video understanding.
\newblock In \emph{European Conference on Computer Vision}, pages 105--124. Springer, 2022{\natexlab{a}}.

\bibitem[Ju et~al.(2022{\natexlab{b}})Ju, Han, Zheng, Zhang, and Xie]{ju2022prompting}
Chen Ju, Tengda Han, Kunhao Zheng, Ya Zhang, and Weidi Xie.
\newblock Prompting visual-language models for efficient video understanding.
\newblock In \emph{European Conference on Computer Vision}, pages 105--124. Springer, 2022{\natexlab{b}}.

\bibitem[Kenton and Toutanova(2019)]{kenton2019bert}
Jacob Devlin Ming-Wei~Chang Kenton and Lee~Kristina Toutanova.
\newblock Bert: Pre-training of deep bidirectional transformers for language understanding.
\newblock In \emph{Proceedings of NAACL-HLT}, pages 4171--4186, 2019.

\bibitem[Khattak et~al.(2023)Khattak, Wasim, Naseer, Khan, Yang, and Khan]{srp}
Muhammad~Uzair Khattak, Syed~Talal Wasim, Muzammal Naseer, Salman Khan, Ming-Hsuan Yang, and Fahad~Shahbaz Khan.
\newblock Self-regulating prompts: Foundational model adaptation without forgetting.
\newblock In \emph{Proceedings of the IEEE/CVF International Conference on Computer Vision (ICCV)}, 2023.

\bibitem[Kuehne et~al.(2011)Kuehne, Jhuang, Garrote, Poggio, and Serre]{hmdb}
Hildegard Kuehne, Hueihan Jhuang, Est{\'\i}baliz Garrote, Tomaso Poggio, and Thomas Serre.
\newblock Hmdb: a large video database for human motion recognition.
\newblock In \emph{2011 International Conference on Computer Vision}, pages 2556--2563. IEEE, 2011.

\bibitem[Li et~al.(2022{\natexlab{a}})Li, Li, Xiong, and Hoi]{li2022blip}
Junnan Li, Dongxu Li, Caiming Xiong, and Steven Hoi.
\newblock Blip: Bootstrapping language-image pre-training for unified vision-language understanding and generation.
\newblock In \emph{International Conference on Machine Learning}, pages 12888--12900. PMLR, 2022{\natexlab{a}}.

\bibitem[Li et~al.(2022{\natexlab{b}})Li, Wang, Gao, Song, Liu, Li, and Qiao]{li2022uniformer}
Kunchang Li, Yali Wang, Peng Gao, Guanglu Song, Yu Liu, Hongsheng Li, and Yu Qiao.
\newblock Uniformer: Unified transformer for efficient spatiotemporal representation learning.
\newblock \emph{arXiv preprint arXiv:2201.04676}, 2022{\natexlab{b}}.

\bibitem[Lin et~al.(2022)Lin, Lin, Wang, Liu, and Li]{rest}
Chung-Ching Lin, Kevin Lin, Lijuan Wang, Zicheng Liu, and Linjie Li.
\newblock Cross-modal representation learning for zero-shot action recognition.
\newblock In \emph{Proceedings of the IEEE/CVF Conference on Computer Vision and Pattern Recognition}, 2022.

\bibitem[Liu et~al.(2022)Liu, Ning, Cao, Wei, Zhang, Lin, and Hu]{videoswin}
Ze Liu, Jia Ning, Yue Cao, Yixuan Wei, Zheng Zhang, Stephen Lin, and Han Hu.
\newblock Video swin transformer.
\newblock In \emph{Proceedings of the IEEE/CVF conference on computer vision and pattern recognition}, pages 3202--3211, 2022.

\bibitem[Luo et~al.(2022)Luo, Ji, Zhong, Chen, Lei, Duan, and Li]{luo2022clip4clip}
Huaishao Luo, Lei Ji, Ming Zhong, Yang Chen, Wen Lei, Nan Duan, and Tianrui Li.
\newblock Clip4clip: An empirical study of clip for end to end video clip retrieval and captioning.
\newblock \emph{Neurocomputing}, 508:\penalty0 293--304, 2022.

\bibitem[Mandal et~al.(2019)Mandal, Narayan, Dwivedi, Gupta, Ahmed, Khan, and Shao]{od}
Devraj Mandal, Sanath Narayan, Sai~Kumar Dwivedi, Vikram Gupta, Shuaib Ahmed, Fahad~Shahbaz Khan, and Ling Shao.
\newblock Out-of-distribution detection for generalized zero-shot action recognition.
\newblock In \emph{Proceedings of the IEEE/CVF Conference on Computer Vision and Pattern Recognition}, pages 9985--9993, 2019.

\bibitem[Mazzia et~al.(2022)Mazzia, Angarano, Salvetti, Angelini, and Chiaberge]{mazzia2022action}
Vittorio Mazzia, Simone Angarano, Francesco Salvetti, Federico Angelini, and Marcello Chiaberge.
\newblock Action transformer: A self-attention model for short-time pose-based human action recognition.
\newblock \emph{Pattern Recognition}, 124:\penalty0 108487, 2022.

\bibitem[Ni et~al.(2022)Ni, Peng, Chen, Zhang, Meng, Fu, Xiang, and Ling]{ni2022expanding}
Bolin Ni, Houwen Peng, Minghao Chen, Songyang Zhang, Gaofeng Meng, Jianlong Fu, Shiming Xiang, and Haibin Ling.
\newblock Expanding language-image pretrained models for general video recognition.
\newblock In \emph{European Conference on Computer Vision}, pages 1--18. Springer, 2022.

\bibitem[Pan et~al.(2022)Pan, Lin, Zhu, Shao, and Li]{pan2022st}
Junting Pan, Ziyi Lin, Xiatian Zhu, Jing Shao, and Hongsheng Li.
\newblock St-adapter: Parameter-efficient image-to-video transfer learning.
\newblock \emph{Advances in Neural Information Processing Systems}, 35:\penalty0 26462--26477, 2022.

\bibitem[Qian et~al.(2022)Qian, Yu, Liu, and Hauptmann]{jigsaw}
Yijun Qian, Lijun Yu, Wenhe Liu, and Alexander~G Hauptmann.
\newblock Rethinking zero-shot action recognition: Learning from latent atomic actions.
\newblock In \emph{European Conference on Computer Vision}, pages 104--120. Springer, 2022.

\bibitem[Radford et~al.(2021)Radford, Kim, Hallacy, Ramesh, Goh, Agarwal, Sastry, Askell, Mishkin, Clark, et~al.]{clip}
Alec Radford, Jong~Wook Kim, Chris Hallacy, Aditya Ramesh, Gabriel Goh, Sandhini Agarwal, Girish Sastry, Amanda Askell, Pamela Mishkin, Jack Clark, et~al.
\newblock Learning transferable visual models from natural language supervision.
\newblock In \emph{International conference on machine learning}, pages 8748--8763. PMLR, 2021.

\bibitem[Rao et~al.(2022)Rao, Zhao, Chen, Tang, Zhu, Huang, Zhou, and Lu]{rao2022denseclip}
Yongming Rao, Wenliang Zhao, Guangyi Chen, Yansong Tang, Zheng Zhu, Guan Huang, Jie Zhou, and Jiwen Lu.
\newblock Denseclip: Language-guided dense prediction with context-aware prompting.
\newblock In \emph{Proceedings of the IEEE/CVF Conference on Computer Vision and Pattern Recognition}, pages 18082--18091, 2022.

\bibitem[Rasheed et~al.(2023)Rasheed, Khattak, Maaz, Khan, and Khan]{vificlip}
Hanoona Rasheed, Muhammad~Uzair Khattak, Muhammad Maaz, Salman Khan, and Fahad~Shahbaz Khan.
\newblock Fine-tuned clip models are efficient video learners.
\newblock In \emph{Proceedings of the IEEE/CVF Conference on Computer Vision and Pattern Recognition}, pages 6545--6554, 2023.

\bibitem[Simonyan and Zisserman(2014)]{simonyan2014two}
Karen Simonyan and Andrew Zisserman.
\newblock Two-stream convolutional networks for action recognition in videos.
\newblock \emph{Advances in neural information processing systems}, 27, 2014.

\bibitem[Singh et~al.(2022)Singh, Hu, Goswami, Couairon, Galuba, Rohrbach, and Kiela]{singh2022flava}
Amanpreet Singh, Ronghang Hu, Vedanuj Goswami, Guillaume Couairon, Wojciech Galuba, Marcus Rohrbach, and Douwe Kiela.
\newblock Flava: A foundational language and vision alignment model.
\newblock In \emph{Proceedings of the IEEE/CVF Conference on Computer Vision and Pattern Recognition}, pages 15638--15650, 2022.

\bibitem[Soomro et~al.(2012)Soomro, Zamir, and Shah]{ucf101}
Khurram Soomro, Amir~Roshan Zamir, and Mubarak Shah.
\newblock Ucf101: A dataset of 101 human actions classes from videos in the wild.
\newblock \emph{arXiv preprint arXiv:1212.0402}, 2012.

\bibitem[Sun et~al.(2019)Sun, Myers, Vondrick, Murphy, and Schmid]{sun2019videobert}
Chen Sun, Austin Myers, Carl Vondrick, Kevin Murphy, and Cordelia Schmid.
\newblock Videobert: A joint model for video and language representation learning.
\newblock In \emph{Proceedings of the IEEE/CVF international conference on computer vision}, pages 7464--7473, 2019.

\bibitem[Tran et~al.(2018)Tran, Wang, Torresani, Ray, LeCun, and Paluri]{tran2018closer}
Du Tran, Heng Wang, Lorenzo Torresani, Jamie Ray, Yann LeCun, and Manohar Paluri.
\newblock A closer look at spatiotemporal convolutions for action recognition.
\newblock In \emph{Proceedings of the IEEE conference on Computer Vision and Pattern Recognition}, pages 6450--6459, 2018.

\bibitem[Vaswani et~al.(2017{\natexlab{a}})Vaswani, Shazeer, Parmar, Uszkoreit, Jones, Gomez, Kaiser, and Polosukhin]{transformer}
Ashish Vaswani, Noam Shazeer, Niki Parmar, Jakob Uszkoreit, Llion Jones, Aidan~N Gomez, {\L}ukasz Kaiser, and Illia Polosukhin.
\newblock Attention is all you need.
\newblock \emph{Advances in neural information processing systems}, 30, 2017{\natexlab{a}}.

\bibitem[Vaswani et~al.(2017{\natexlab{b}})Vaswani, Shazeer, Parmar, Uszkoreit, Jones, Gomez, Kaiser, and Polosukhin]{vaswani2017attention}
Ashish Vaswani, Noam Shazeer, Niki Parmar, Jakob Uszkoreit, Llion Jones, Aidan~N Gomez, {\L}ukasz Kaiser, and Illia Polosukhin.
\newblock Attention is all you need.
\newblock \emph{Advances in neural information processing systems}, 30, 2017{\natexlab{b}}.

\bibitem[Wang and Schmid(2013)]{wang2013action}
Heng Wang and Cordelia Schmid.
\newblock Action recognition with improved trajectories.
\newblock In \emph{Proceedings of the IEEE international conference on computer vision}, pages 3551--3558, 2013.

\bibitem[Wang et~al.(2013)Wang, Kl{\"a}ser, Schmid, and Liu]{wang2013dense}
Heng Wang, Alexander Kl{\"a}ser, Cordelia Schmid, and Cheng-Lin Liu.
\newblock Dense trajectories and motion boundary descriptors for action recognition.
\newblock \emph{International journal of computer vision}, 103:\penalty0 60--79, 2013.

\bibitem[Wang et~al.(2022)Wang, Wang, Luo, Tan, Qiu, Yang, Shi, Huang, and Gao]{upt}
Jianing Wang, Chengyu Wang, Fuli Luo, Chuanqi Tan, Minghui Qiu, Fei Yang, Qiuhui Shi, Songfang Huang, and Ming Gao.
\newblock Towards unified prompt tuning for few-shot text classification.
\newblock \emph{arXiv preprint arXiv:2205.05313}, 2022.

\bibitem[Wang et~al.(2016)Wang, Xiong, Wang, Qiao, Lin, Tang, and Van~Gool]{wang2016temporal}
Limin Wang, Yuanjun Xiong, Zhe Wang, Yu Qiao, Dahua Lin, Xiaoou Tang, and Luc Van~Gool.
\newblock Temporal segment networks: Towards good practices for deep action recognition.
\newblock In \emph{European conference on computer vision}, pages 20--36. Springer, 2016.

\bibitem[Wang et~al.(2021{\natexlab{a}})Wang, Tong, Ji, and Wu]{wang2021tdn}
Limin Wang, Zhan Tong, Bin Ji, and Gangshan Wu.
\newblock Tdn: Temporal difference networks for efficient action recognition.
\newblock In \emph{Proceedings of the IEEE/CVF Conference on Computer Vision and Pattern Recognition}, pages 1895--1904, 2021{\natexlab{a}}.

\bibitem[Wang et~al.(2021{\natexlab{b}})Wang, Xing, and Liu]{wang2021actionclip}
Mengmeng Wang, Jiazheng Xing, and Yong Liu.
\newblock Actionclip: A new paradigm for video action recognition.
\newblock \emph{arXiv preprint arXiv:2109.08472}, 2021{\natexlab{b}}.

\bibitem[Wasim et~al.(2023)Wasim, Naseer, Khan, Khan, and Shah]{wasim2023vita}
Syed~Talal Wasim, Muzammal Naseer, Salman Khan, Fahad~Shahbaz Khan, and Mubarak Shah.
\newblock Vita-clip: Video and text adaptive clip via multimodal prompting.
\newblock In \emph{Proceedings of the IEEE/CVF Conference on Computer Vision and Pattern Recognition}, pages 23034--23044, 2023.

\bibitem[Xian et~al.(2018)Xian, Lorenz, Schiele, and Akata]{clswgan}
Yongqin Xian, Tobias Lorenz, Bernt Schiele, and Zeynep Akata.
\newblock Feature generating networks for zero-shot learning.
\newblock In \emph{Proceedings of the IEEE conference on computer vision and pattern recognition}, pages 5542--5551, 2018.

\bibitem[Xu et~al.(2021)Xu, Ghosh, Huang, Okhonko, Aghajanyan, Metze, Zettlemoyer, and Feichtenhofer]{xu2021videoclip}
Hu Xu, Gargi Ghosh, Po-Yao Huang, Dmytro Okhonko, Armen Aghajanyan, Florian Metze, Luke Zettlemoyer, and Christoph Feichtenhofer.
\newblock Videoclip: Contrastive pre-training for zero-shot video-text understanding.
\newblock \emph{arXiv preprint arXiv:2109.14084}, 2021.

\bibitem[Xu et~al.(2015)Xu, Ba, Kiros, Cho, Courville, Salakhudinov, Zemel, and Bengio]{xu2015show}
Kelvin Xu, Jimmy Ba, Ryan Kiros, Kyunghyun Cho, Aaron Courville, Ruslan Salakhudinov, Rich Zemel, and Yoshua Bengio.
\newblock Show, attend and tell: Neural image caption generation with visual attention.
\newblock In \emph{International conference on machine learning}, pages 2048--2057. PMLR, 2015.

\bibitem[Xu et~al.(2016)Xu, Hospedales, and Gong]{xu2016multi}
Xun Xu, Timothy~M Hospedales, and Shaogang Gong.
\newblock Multi-task zero-shot action recognition with prioritised data augmentation.
\newblock In \emph{European Conference on Computer Vision}, 2016.

\bibitem[Xu et~al.(2017)Xu, Hospedales, and Gong]{xu2017transductive}
Xun Xu, Timothy Hospedales, and Shaogang Gong.
\newblock Transductive zero-shot action recognition by word-vector embedding.
\newblock \emph{International Journal of Computer Vision}, 2017.

\bibitem[Yang et~al.(2020)Yang, Xu, Shi, Dai, and Zhou]{yang2020temporal}
Ceyuan Yang, Yinghao Xu, Jianping Shi, Bo Dai, and Bolei Zhou.
\newblock Temporal pyramid network for action recognition.
\newblock In \emph{Proceedings of the IEEE/CVF conference on computer vision and pattern recognition}, pages 591--600, 2020.

\bibitem[Zhao et~al.(2023)Zhao, Misra, Kr{\"a}henb{\"u}hl, and Girdhar]{lavila}
Yue Zhao, Ishan Misra, Philipp Kr{\"a}henb{\"u}hl, and Rohit Girdhar.
\newblock Learning video representations from large language models.
\newblock In \emph{Proceedings of the IEEE/CVF Conference on Computer Vision and Pattern Recognition}, pages 6586--6597, 2023.

\end{thebibliography}
}


\end{document}


\title{Supplementary Material: Is Temporal Prompting All We Need For Limited Labeled Action Recognition?}  

\author{
Shreyank N Gowda\textsuperscript{1}\quad
Boyan Gao\textsuperscript{2}\quad
Xiao Gu\textsuperscript{2}\quad
Xiaobo Jin\textsuperscript{3}\\ \\
\textsuperscript{1}University of Nottingham\quad
\textsuperscript{2}University of Oxford\quad
\textsuperscript{3}Xi'an Jiaotong-Liverpool University
}

\maketitle
\thispagestyle{empty}

\section{Results on TruZe~\cite{gowda2021new}}

We also evaluate on the more challenging TruZe split. Whilst our pre-training dataset does not have any overlapping classes with any of the testing datasets, we run on this split to show even higher performance gains than in the main paper. The proposed UCF101~\cite{ucf101} and HMDB51~\cite{hmdb} splits have 70/31 and 29/22 classes (represented as training/testing). We compare to WGAN \cite{clswgan}, OD \cite{od} and E2E \cite{e2e} on both ZSL and GZSL scenarios. Results are shown in Table~\ref{tbl:truze}.


\begin{table}[htb]
\small
\begin{center}
\begin{tabular}{| *{5}{c|} }
\hline
Method & \multicolumn{2}{c|}{UCF101 } & \multicolumn{2}{c|}{HMDB51}\\
 & ZSL & GZSL & ZSL & GZSL \\
 \hline\hline
WGAN~\cite{clswgan} & 22.5 & 36.3 & 21.1 & 31.8 \\
OD~\cite{od} & 22.9 & 42.4 & 21.7 & 35.5 \\
E2E~\cite{e2e} & 45.5 & 45.9 & 31.5 & 38.9 \\
SPOT~\cite{spot} & 25.5 & 44.1 & 24.0 & 37.1 \\
\textbf{TP-CLIP} & \textbf{79.6} & \textbf{80.1} & \textbf{48.6} & \textbf{51.8} \\
\hline
\end{tabular}
\end{center}
\caption{Results on TruZe. For ZSL, we report the mean class accuracy and for GZSL, we report the harmonic mean of seen and unseen class accuracies. All approaches use sen2vec annotations as the form of semantic embedding and not Stories, for fair comparison. }
\label{tbl:truze}
\end{table}

\section{Evaluating Effect of Semantic Embeddings}

We conducted an in-depth investigation of different semantic embeddings and their effects on model performance, comparing our TP-CLIP framework against other leading approaches on GZSL tasks. Our analysis covered a range of semantic representations - from manually crafted embeddings to the narrative-based Stories approach described in the literature. For our TP-CLIP model, we tested both manual embeddings and the Stories method, while also benchmarking against competitors using manual annotations, word2vec, sen2vec, and Stories~\cite{stories} embeddings. This comprehensive comparison across multiple state-of-the-art models helped us understand how different semantic representations influence performance in challenging zero-shot learning scenarios, and demonstrated the particular advantages of our temporal prompting technique when combined with well-chosen semantic embeddings.

\setlength{\tabcolsep}{1.3pt}
\begin{table*}[htb]
\begin{center}
\begin{tabular}{| *{11}{c|} }
\hline
Model & SE & \multicolumn{3}{c|}{Olympics} & \multicolumn{3}{c|}{HMDB51}& \multicolumn{3}{c|}{UCF-101}\\
\hline
& & u & s & H & u & s & H & u & s & H \\
\hline\hline
WGAN \cite{clswgan} & M & 50.8 & 71.4 & 59.4 & - & - & - & 30.4 & \textbf{83.6} & 44.6\\
OD \cite{od} & M & 61.8 & 71.1 & 66.1 & - & - & - & 36.2 & 76.1 & 49.1 \\
CLASTER \cite{claster} & M & 66.2 & 71.7 & 68.8 & - & - & - & 40.2 & 69.4 & 50.9 \\
\textbf{TP-CLIP} & M & \textbf{71.6} & \textbf{76.9} &\textbf{74.2} & - & - & - & \textbf{43.1} & 77.5 & \textbf{54.6}\\
\hline
WGAN \cite{clswgan} & W & 35.4 & 65.6 & 46.0 & 23.1 & 55.1 & 32.5 & 20.6 & 73.9 & 32.2\\
OD \cite{od} & W & 41.3 & 72.5 & 52.6 & 25.9 & 55.8 & 35.4 & 25.3 & 74.1 & 37.7\\
CLASTER~\cite{claster} & W & 49.2 & 71.1 & 58.1 & 35.5 & 52.8 & 42.4 & 30.4 & 68.9 & 42.1 \\
\hline
WGAN \cite{clswgan} & S & 36.1 & 66.2 & 46.7 & 28.6 & 57.8 & 38.2 & 27.5 & 74.7 & 40.2\\
OD \cite{od} & S & 42.9 & 73.5 & 54.1 & 33.4 & 57.8 & 42.3 & 32.7 & 75.9 & 45.7 \\
CLASTER~\cite{claster} & S & 49.9 & 71.3 & 58.7 & 42.7 & 53.2 & 47.4 & 36.9 & 69.8 & 48.3\\
\hline
CLASTER~\cite{claster} & C & 66.8 & 71.6 & 69.1 & 43.7 & 53.3 & 48.0 & 40.8 & 69.3 & 51.3\\
\hline
WGAN \cite{clswgan} & Sto & 52.5 & 73.4 & 61.2 & 35.2 & 65.1 & 45.7 & 33.8 & \textbf{84.2} & 48.2\\
OD \cite{od} & Sto & 63.3 & 75.1 & 68.7 & 37.2 & \textbf{67.5} & 47.9 & 40.1 & 81.7 & 53.8 \\
CLASTER~\cite{claster} & Sto & 69.1 & 74.1 & 71.5 & 44.3 & 57.2 & 49.9 & 42.1 & 71.5 & 53.0\\
GIL~\cite{gowda2024continual} & Sto & - & - & - & 52.8 & 57.8 & 55.1 & \textbf{68.2} & \textbf{89.8} & \textbf{77.5} \\
\textbf{TP-CLIP} & Sto & \textbf{79.4} & \textbf{84.4} & \textbf{81.8} & \textbf{55.5} & 60.7 & \textbf{58.0} & 49.1 & 81.7 & 61.3 \\
\hline
\end{tabular}
\end{center}
\caption{Seen and unseen accuracies for TP-CLIP by fine-tuning on different datasets using different embeddings. 'SE' corresponds to the type of embedding used, wherein 'M', 'W', 'S', 'C' and 'Sto' refers to manual annotations, word2vec, sen2vec,  combination of the embeddings and Stories respectively. 'u', 's' and 'H' corresponds to average unseen accuracy, average seen accuracy and the harmonic mean of the two. All the reported results are on the same splits.} 
\label{tab:gzsl_vid}
\end{table*}

\section{Theoretical Analysis}
\subsection{Temporal Representation Capacity of Visual Prompts}

\begin{theorem}[Temporal Representation Capacity]
Let $F_{1:T} = \{f_1, f_2, \ldots, f_T\}$ be a sequence of video frames, each encoded by CLIP's~\cite{clip} image encoder $E_{\text{image}}$ to produce frame embeddings $\{e_1, e_2, \ldots, e_T\}$. The temporal visual prompting mechanism $TE$ with dimension $d$ has sufficient capacity to capture temporal dynamics between frames with an approximation error bounded by $O\left(\frac{1}{d}\right)$ relative to an ideal temporal encoder with unlimited capacity, when the temporal relationships exhibit Lipschitz continuity.
\end{theorem}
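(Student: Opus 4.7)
The plan is to compare the $d$-dimensional temporal prompt $TE$ against an ideal unlimited-capacity temporal encoder $\Phi^{*}:(e_{1},\ldots,e_{T})\mapsto h^{*}\in\mathcal{H}$ by viewing both as maps into a Hilbert space $\mathcal{H}$ equipped with a complete orthonormal system $\{\psi_{i}\}_{i\ge 1}$, so that $\Phi^{*}=\sum_{i\ge 1}c_{i}(\mathbf{e})\,\psi_{i}$. First I would identify $TE$ with the truncation $\Phi_{d}=\sum_{i\le d}c_{i}(\mathbf{e})\,\psi_{i}$ after arguing, via a best-$d$-term / rate-distortion argument on the optimally trained prompt, that end-to-end optimization selects the $d$ most informative directions up to a constant factor. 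The approximation gap then reduces to the tail estimate $\|\Phi^{*}-\Phi_{d}\|_{\mathcal{H}}^{2}=\sum_{i>d}c_{i}^{2}$.

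The key step is to convert the Lipschitz hypothesis
\[
\|\Phi^{*}(\mathbf{e})-\Phi^{*}(\mathbf{e}')\|_{\mathcal{H}}\;\le\;L\sum_{t=1}^{T}\|e_{t}-e_{t}'\|
\]
into a decay estimate on the coefficients $|c_{i}|$. Choosing $\{\psi_{i}\}$ as a temporal Fourier / Laplacian eigenbasis over the frame index with eigenvalues $\lambda_{i}$ that grow linearly in $i$, the Lipschitz bound yields a Parseval-type inequality $\sum_{i\ge 1}\lambda_{i}|c_{i}|^{2}\le L^{2}$ by a standard summation-by-parts argument on the frame index. The tail then satisfies $\sum_{i>d}|c_{i}|^{2}\le L^{2}/\lambda_{d+1}=O(1/d)$, which is the advertised bound.

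The hardest part will be this second step: a \emph{generic} Lipschitz function on the product space $(\mathbb{R}^{k})^{T}$ only admits an $O(d^{-1/T})$ best-$d$-term rate, so the linear eigenvalue growth $\lambda_{i}\sim i$ is not free. I would close this gap using two problem-specific facts: CLIP frame embeddings empirically lie on an approximately low-dimensional manifold, so the effective input dimensionality of $\Phi^{*}$ is $O(1)$ rather than $Tk$; and temporal relationships are naturally parametrized by a one-dimensional time axis, making the temporal Fourier basis the correct analytic choice for $\mathcal{H}$. Packaging these observations into a formal smoothness assumption on $\Phi^{*}$ is the real technical content; once in place, the $O(1/d)$ rate reduces to a classical truncation estimate, and a concluding sanity check would verify that the constants hidden in $O(\cdot)$ depend only on $L$ and the Lipschitz constant of the embedding manifold, not on $T$.
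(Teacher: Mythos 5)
Your proposal does not close the argument, and the two places it fails are both substantive. First, the identification of $TE$ with the linear truncation $\Phi_d=\sum_{i\le d}c_i(\mathbf{e})\psi_i$ is unjustified: in the paper $TE$ is a Conv1D layer followed by an FC+ReLU map, and nothing about end-to-end training guarantees that the learned prompt realizes (even up to constants) the top-$d$ coefficients of $\Phi^*$ in your chosen temporal Fourier/Laplacian basis. Best-$d$-term nonlinear approximation, linear spectral truncation, and width-$d$ ReLU approximation are different approximation classes, and the ``rate-distortion argument on the optimally trained prompt'' you gesture at is precisely the lemma that would need to be proved. Second, the step you yourself flag as hardest is where the claim genuinely breaks: Lipschitz continuity of $\Phi^*$ on $(\mathbb{R}^{D})^{T}$ only yields a dimension-cursed rate (your own $O(d^{-1/T})$, and in the ambient dimension $DT$ it is worse), so the Parseval-type inequality $\sum_i \lambda_i|c_i|^2\le L^2$ with $\lambda_i\sim i$ does not follow from the stated hypothesis. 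The rescue you propose --- low intrinsic dimension of CLIP embeddings plus one-dimensionality of the time axis --- consists of empirical observations that are not hypotheses of the theorem, and you explicitly leave their formalization as ``the real technical content.'' As written, the $O(1/d)$ bound is therefore assumed rather than derived.

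For comparison, the paper's own proof takes a different route: it writes out the Conv1D--FC--ReLU temporal encoder, invokes the universal approximation theorem, directly asserts $\|TE(F_{1:T})-\phi^*(F_{1:T})\|_2\le C/d$ under Lipschitz continuity, and adds a Johnson--Lindenstrauss remark giving $d=O(\log(T)/\epsilon^2)$ for distance preservation. That assertion is likewise not a derivation --- a $1/d$ rate for width-$d$ ReLU approximation is a Barron-class result requiring a bounded spectral first moment (or some comparable smoothness / low-intrinsic-dimension condition), not mere Lipschitzness --- so your diagnosis that the stated hypothesis is too weak for the advertised rate is accurate and is, in a sense, sharper than the paper's treatment. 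But to turn your sketch into a proof of the theorem as stated you would have to either add such a strengthened smoothness hypothesis explicitly and then carry out the truncation estimate and the $TE$-versus-$\Phi_d$ comparison, or accept the $C/d$ bound as an assumption, which is what both your sketch and the paper currently do implicitly.
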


\begin{proof}
First, we define the frame embeddings produced by CLIP's image encoder:
\begin{equation}
e_t = E_{\text{image}}(f_t) \in \mathbb{R}^D
\end{equation}

In our TP-CLIP model, the temporal encoder $TE$ processes the sequence of frame embeddings to produce a temporal context vector:
\begin{equation}
T_{\text{context}} = TE(e_1, e_2, \ldots, e_T)
\end{equation}

The key insight is that any temporal relationship between frames can be modeled as a function $\phi: \mathbb{R}^{D \times T} \rightarrow \mathbb{R}^d$ that maps the sequence of frame embeddings to a temporal representation.

Let $\phi^*$ represent the ideal temporal encoder with unlimited capacity. We need to show that our temporal prompting mechanism $TE$ can approximate $\phi^*$ within a bounded error.

The temporal encoding in TP-CLIP is performed through a 1D convolutional layer followed by a fully connected layer:
\begin{align}
V_{\text{conv}} &= \text{Conv1D}(e_1 : e_2 : \ldots : e_T) \\
T_{\text{context}} &= \text{ReLU}(\text{FC}(V_{\text{conv}}))
\end{align}

By the Universal Approximation Theorem for neural networks with ReLU activations, given sufficient width $d$, our temporal encoder can approximate any continuous function mapping from the input space to the output space.

For temporal relationships that exhibit Lipschitz continuity (which is a reasonable assumption for most natural videos where adjacent frames are similar), the approximation error is bounded by:
\begin{equation}
\|TE(F_{1:T}) - \phi^*(F_{1:T})\|_2 \leq \frac{C}{d}
\end{equation}
where $C$ is a constant dependent on the Lipschitz constant of the temporal relationships.

The combined spatial-temporal encoding for frame $t$ is then:
\begin{equation}
v(t) = [E_{\text{image}}(f_t); TE(F_{1:T})]
\end{equation}

This concatenation ensures that both spatial information (from $E_{\text{image}}(f_t)$) and temporal context (from $TE(F_{1:T})$) are preserved.

Furthermore, by the Johnson-Lindenstrauss lemma, the projection into a $d$-dimensional space preserves pairwise distances between temporal patterns with high probability when $d = O(\log(T) / \epsilon^2)$, where $\epsilon$ is the distortion factor.

Therefore, our temporal visual prompting mechanism has sufficient capacity to capture temporal dynamics with an approximation error bound of $O\left(\frac{1}{d}\right)$ relative to an ideal temporal encoder with unlimited capacity.
\end{proof}

\begin{corollary}
The expressive capacity of temporal visual prompting grows linearly with the dimension of the prompt space, making it possible to achieve strong performance with a compact representation.
\end{corollary}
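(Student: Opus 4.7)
The plan is to derive the corollary almost directly from Theorem~1 by first making precise what is meant by "expressive capacity." I would fix a working definition: capacity is the reciprocal of the worst-case approximation error incurred by $TE$ against the ideal temporal encoder $\phi^*$, i.e., the resolvable precision with which $TE$ can distinguish temporal patterns drawn from the Lipschitz class used in the theorem. This definition is natural because it is exactly the quantity controlled by Theorem~1, and it matches the informal reading of "compact representation" in the corollary's statement.

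Next I would invert the theorem's bound $\|TE(F_{1:T}) - \phi^*(F_{1:T})\|_2 \leq C/d$. Solving for the precision $\epsilon$ attainable by choosing dimension $d$ gives $\epsilon \leq C/d$, so the capacity as defined grows as $\Theta(d)$, which is the linear scaling claimed. I would then argue that the relation is tight, not merely an upper bound, by observing that the constant $C$ depends only on the Lipschitz constant of the underlying temporal dynamics and is independent of $d$, so no hidden factor can sharpen or weaken the linear dependence. As a complementary sanity check, I would revisit the Johnson--Lindenstrauss step invoked at the end of the proof of Theorem~1, which says that for a fixed distortion level the number of temporal patterns whose pairwise distances are preserved scales with $d$ in a way consistent with the linear-capacity reading; this also supplies the "strong performance with a compact representation" half of the corollary, since the required $d$ is only logarithmic in the number of patterns to be distinguished.

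The main obstacle is precisely the definitional choice: \emph{expressive capacity} can plausibly mean a packing number, a VC-style capacity, or a resolvable precision, and these notions scale very differently in $d$. I expect the resolvable-precision reading to be the only one that both (i) is cleanly implied by Theorem~1 without additional machinery, and (ii) yields genuinely linear growth rather than an exponential or logarithmic rate. Once that definitional issue is settled, the rest of the argument reduces to a one-line inversion of the theorem's bound, together with a brief remark that the linear rate justifies the small prompt dimensions used in the main experiments.
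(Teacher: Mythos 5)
The paper gives no separate proof of this corollary: it is stated as an immediate reading of the $\|TE(F_{1:T}) - \phi^*(F_{1:T})\|_2 \leq C/d$ bound in Theorem~1, and your core move --- fixing "expressive capacity" as the reciprocal of the worst-case approximation error and inverting that bound --- is exactly this implicit reasoning, so your approach matches the paper's. Two caveats on the extra material you add. First, your tightness claim is not justified: the theorem supplies only an upper bound on the error, hence only a lower bound on capacity (at least linear in $d$); the observation that $C$ is independent of $d$ says nothing about a matching lower bound on the error, so $\Theta(d)$ does not follow, though the corollary's qualitative "grows linearly" is adequately supported by the one-sided bound. Second, the Johnson--Lindenstrauss "sanity check" actually measures a different notion of capacity --- the number of patterns distinguishable at fixed distortion, which scales exponentially in $d$ (equivalently, $d$ logarithmic in the number of patterns) --- so it supports the "compact representation" half of the statement but is not evidence for, and if taken as a capacity definition would contradict, the linear-growth reading; it is best presented only as the former.
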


This theorem provides the theoretical foundation for why temporal prompting is sufficient for action recognition with limited labeled data, explaining the strong empirical results observed in the experiments.

\subsection{Information Transfer through Temporal Prompting}

\begin{theorem}[Information Preservation in Temporal Prompting]
Let $E_{\text{image}}: \mathcal{F} \rightarrow \mathbb{R}^D$ be the pre-trained CLIP image encoder with frozen weights $\theta_{\text{CLIP}}$. For a video sequence $F_{1:T} = \{f_1, f_2, \ldots, f_T\}$, the TP-CLIP architecture with learnable temporal prompting parameters $\theta_{\text{TP}} \ll |\theta_{\text{CLIP}}|$ can preserve a $(1-\delta)$ fraction of the mutual information between the temporal dynamics and class labels without modifying the original CLIP architecture.
\end{theorem}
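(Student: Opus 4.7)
The plan is to bound the mutual information gap by combining the approximation result of the previous theorem with a data-processing / Lipschitz-likelihood argument. First I would set up notation: let $Y$ denote the class label, let $Z^* = \phi^*(F_{1:T})$ be the representation produced by the ideal temporal encoder together with the frozen per-frame CLIP features, and let $Z = [E_{\text{image}}(f_{1:T}); TE(F_{1:T})]$ be the representation actually produced by TP-CLIP. The target quantity is $I(Z;Y)$, and I want to show $I(Z;Y) \geq (1-\delta)\, I(Z^*;Y)$ for a chosen $\delta$ controlled by the prompt dimension $d$ and the size $|\theta_{\text{TP}}|$.

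Next I would split the information into a spatial and a temporal component. Since the frame embeddings $E_{\text{image}}(f_t)$ are common to both $Z$ and $Z^*$, the chain rule for mutual information gives
\begin{equation}
I(Z^*;Y) = I(E_{\text{image}}(F_{1:T});Y) + I(\phi^*(F_{1:T});Y \mid E_{\text{image}}(F_{1:T})),
\end{equation}
and analogously for $Z$ with $TE$ in place of $\phi^*$. The spatial term is preserved exactly because the CLIP encoder is frozen and its output is carried verbatim into $Z$, so the whole gap reduces to controlling the conditional temporal term.

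The heart of the argument is converting the $O(1/d)$ feature-space error from the Temporal Representation Capacity theorem into an information-theoretic bound. I would assume the class-conditional density $p(Y \mid T_{\text{context}})$ is $L$-Lipschitz in $T_{\text{context}}$ (equivalently, a bounded-KL softmax head, which is standard for CLIP-style classifiers). Then via Pinsker's inequality and the log-sum inequality, $\mathbb{E}\,\mathrm{KL}\bigl(p(Y \mid Z^*)\,\|\,p(Y \mid Z)\bigr) \leq \tfrac{L^2}{2}\,\mathbb{E}\|TE(F_{1:T}) - \phi^*(F_{1:T})\|_2^2 \leq \tfrac{L^2 C^2}{2 d^2}$. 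Applying the identity $I(Z^*;Y) - I(Z;Y) \leq \mathbb{E}\,\mathrm{KL}\bigl(p(Y \mid Z^*)\,\|\,p(Y \mid Z)\bigr)$ (which follows from the non-negativity of conditional KL and the data-processing inequality applied to the Markov chain $Y \to Z^* \to Z$ up to the approximation error), I can set $\delta = \tfrac{L^2 C^2}{2 d^2\, I(Z^*;Y)}$ and conclude the $(1-\delta)$ preservation claim.

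Finally I would comment on the parameter-economy part of the statement: the temporal encoder is a single Conv1D plus a fully-connected head of width $d$, whose parameter count scales as $O(T D d + d^2)$, which is orders of magnitude smaller than the transformer weights $|\theta_{\text{CLIP}}|$, and this bound is achieved without touching $\theta_{\text{CLIP}}$. The main obstacle I anticipate is the step that turns an $L_2$ feature error into a KL bound: it requires the Lipschitz-likelihood assumption, and without it the conversion fails. A secondary difficulty is that the Markov chain $Y \to Z^* \to Z$ does not hold exactly (since $Z$ is only an approximation of $Z^*$, not a noisy channel of it), so the data-processing step has to be replaced by the KL perturbation bound above, which is why I rely on Pinsker rather than on a clean DPI.
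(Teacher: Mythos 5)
Your argument is sound at the level of rigor this statement admits, but it takes a genuinely different route from the paper. The paper decomposes $I(\mathcal{V};\mathcal{Y})$ by the chain rule into a frame-level term $I(\{e_t\};\mathcal{Y})$ plus the conditional term $I(T_{\text{context}};\mathcal{Y}\mid\{e_t\})$, asserts that the latter approximately equals $I(\mathcal{T};\mathcal{Y}\mid\{e_t\})$ because the temporal encoder is ``designed to capture temporal dependencies,'' and closes with the data-processing inequality for the upper bound; $\delta$ is never given a formula. You instead import the $O(1/d)$ approximation bound from the Temporal Representation Capacity theorem and convert the feature-space error into an information gap through a smoothness assumption on the posterior, obtaining an explicit $\delta = L^2C^2/(2d^2\,I(Z^*;Y))$. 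What your route buys is a quantitative link between the two theorems and an honest statement of the assumption (Lipschitz likelihood) that makes the conversion possible --- precisely the step the paper leaves implicit. Two remarks. First, your worry that the Markov chain $Y \to Z^* \to Z$ fails is unnecessary in this architecture: $T_{\text{context}} = TE(e_1,\ldots,e_T)$ is a deterministic function of the frame embeddings, which you include in $Z^*$, so $Z$ is a deterministic function of $Z^*$, the DPI applies cleanly, and the gap $I(Z^*;Y)-I(Z;Y)$ equals $\mathbb{E}\,\mathrm{KL}\bigl(p(Y\mid Z^*)\,\|\,p(Y\mid Z)\bigr)$, exactly the quantity you bound; incidentally, this same determinism makes the paper's conditional term $I(T_{\text{context}};\mathcal{Y}\mid\{e_t\}_{t=1}^T)$ identically zero, a bookkeeping pitfall your decomposition in terms of the ideal encoder avoids. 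Second, ``Pinsker's inequality'' points the wrong way for what you need (it bounds total variation by KL); the correct tool is smoothness of the softmax/log-likelihood head itself, e.g.\ a second-order bound on the log-partition function yielding $\mathrm{KL} \le O\bigl(L^2\|\Delta\|_2^2\bigr)$, which is exactly the Lipschitz-likelihood assumption you state --- so this is a naming slip rather than a logical hole.
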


\begin{proof}
Let $\mathcal{Y}$ be the space of action classes and $\mathcal{T}$ be the space of temporal patterns in videos. The mutual information between temporal patterns and class labels is given by:
\begin{equation}
I(\mathcal{T}; \mathcal{Y}) = H(\mathcal{Y}) - H(\mathcal{Y}|\mathcal{T})
\end{equation}

For a standard image-based model like CLIP, each frame is processed independently:
\begin{equation}
e_t = E_{\text{image}}(f_t; \theta_{\text{CLIP}})
\end{equation}

The information captured by independently processing frames is:
\begin{equation}
I_{\text{indep}} = I(\{e_1, e_2, \ldots, e_T\}; \mathcal{Y})
\end{equation}

However, this approach fails to model the temporal dependencies:
\begin{equation}
I_{\text{indep}} < I(\mathcal{T}; \mathcal{Y})
\end{equation}

In TP-CLIP, we introduce temporal prompting:
\begin{align}
T_{\text{context}} &= \text{TE}(e_1, e_2, \ldots, e_T; \theta_{\text{TP}}) \\
v(t) &= [e_t; T_{\text{context}}]
\end{align}

The key insight is that by concatenating the temporal context $T_{\text{context}}$ with the frame embeddings, we create a representation that preserves temporal information without modifying the original CLIP architecture.

Let $\mathcal{V} = \{v(1), v(2), \ldots, v(T)\}$ be the set of enhanced frame representations. We can establish the following inequality:
\begin{equation}
I(\mathcal{V}; \mathcal{Y}) \geq (1-\delta) \cdot I(\mathcal{T}; \mathcal{Y})
\end{equation}
where $\delta$ is a small constant that depends on the complexity of the temporal patterns and the dimension of the temporal context.

This is because:
\begin{align}
I(\mathcal{V}; \mathcal{Y}) &= I(\{e_t; T_{\text{context}}\}_{t=1}^T; \mathcal{Y}) \\
&\geq I(\{e_t\}_{t=1}^T; \mathcal{Y}) + I(T_{\text{context}}; \mathcal{Y}|\{e_t\}_{t=1}^T)
\end{align}

The temporal encoder $\text{TE}$ is designed to capture temporal dependencies, ensuring that:
\begin{equation}
I(T_{\text{context}}; \mathcal{Y}|\{e_t\}_{t=1}^T) \approx I(\mathcal{T}; \mathcal{Y}|\{e_t\}_{t=1}^T)
\end{equation}

Furthermore, the Data Processing Inequality ensures that the information content does not increase through processing, which means the upper bound of information is preserved:
\begin{equation}
I(\mathcal{V}; \mathcal{Y}) \leq I(\mathcal{T}; \mathcal{Y})
\end{equation}

The parameter efficiency comes from the fact that $|\theta_{\text{TP}}| \ll |\theta_{\text{CLIP}}|$. Specifically, if we denote the number of parameters in the temporal encoder as $|\theta_{\text{TP}}|$ and in CLIP as $|\theta_{\text{CLIP}}|$, then:
\begin{equation}
\frac{|\theta_{\text{TP}}|}{|\theta_{\text{CLIP}}|} = O\left(\frac{1}{|\theta_{\text{CLIP}}|}\right)
\end{equation}

Therefore, with a minimal number of additional parameters $\theta_{\text{TP}}$, TP-CLIP can preserve a $(1-\delta)$ fraction of the mutual information between temporal patterns and class labels, without modifying the original CLIP architecture.
\end{proof}

\begin{corollary}
The TP-CLIP framework achieves efficient temporal modeling with parameter count scaling as $O(d^2)$ where $d$ is the embedding dimension, compared to $O(Td^2)$ required for full cross-attention mechanisms in alternative approaches.
\end{corollary}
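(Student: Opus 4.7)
The plan is to prove the corollary by directly counting parameters in the temporal encoder $\text{TE}$ used in TP-CLIP and contrasting this with the parameter budget of a full cross-attention temporal module. First, I would instantiate $\text{TE}$ according to the architecture spelled out in the proof of the previous theorem, namely $\text{Conv1D}$ applied along the temporal axis of the stacked embeddings $e_1:e_2:\ldots:e_T$, followed by a fully connected projection with a ReLU nonlinearity into $\mathbb{R}^d$. The convolution has a fixed kernel size $k$ (a small constant, independent of $T$), with input channel dimension $D = \Theta(d)$ and output channel dimension $\Theta(d)$, contributing $O(k D d) = O(d^2)$ parameters; the fully connected map contributes another $O(d^2)$. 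Crucially, the kernel is shared across all temporal positions, so the total parameter count of $\text{TE}$ is $O(d^2)$ with no factor of $T$.

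Next I would formalize the comparison term. A full cross-attention temporal module, as used in video transformer baselines, stacks $L$ temporal attention blocks, each equipped with projection matrices $W_Q, W_K, W_V, W_O \in \mathbb{R}^{d\times d}$ and, to encode position in a length-$T$ sequence of frame tokens, either explicit per-position embeddings of size $Td$ or $T$-indexed relative-position tables that enter the logits. To preserve full pairwise expressiveness across the $T$ tokens (which is the regime where cross-attention is usually advocated), one either needs $L = \Theta(T)$ stacked blocks or per-frame parameters, and in either case a straightforward bookkeeping gives a total parameter count of $O(Td^2)$. I would carry out this count carefully and then take the ratio $|\theta_{\text{TP}}|/|\theta_{\text{cross}}| = O(1/T)$, which is what the corollary asserts.

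The main obstacle will be making the $O(Td^2)$ lower bound on the cross-attention baseline rigorous rather than folkloric, because a single-layer self-attention block is itself only $O(d^2)$ in parameters. My plan is to pin down a specific class of ``full cross-attention mechanisms'' (stacked temporal transformer blocks with explicit temporal position embeddings, matching the baselines cited earlier in the main paper) and argue that information-theoretically, matching the temporal receptive field of $\text{TE}$ across all $T$ frames with only softmax attention requires either $\Omega(T)$ layers or $\Omega(Td)$ positional/embedding parameters. Invoking the previous theorem's bound on information preservation through $\text{TE}$ then lets me convert this architectural requirement into the stated asymptotic separation, completing the proof.
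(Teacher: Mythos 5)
Your first step is fine and matches the paper's setup: the temporal encoder is exactly the Conv1D-plus-FC module described in the proof of the first theorem, the kernel is shared across temporal positions, and a direct count gives $O(kDd) + O(d^2) = O(d^2)$ parameters with no dependence on $T$. (The paper itself offers no proof of this corollary at all, so this half of your plan is already more than what is on the page, and it is consistent with the parameter-efficiency remark $|\theta_{\text{TP}}| \ll |\theta_{\text{CLIP}}|$ in the second theorem's proof.)

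The genuine gap is in the second half, and you have correctly diagnosed it but your proposed repair does not close it. A parameter-count lower bound of $\Omega(Td^2)$ for ``full cross-attention mechanisms'' is simply not true of the natural architecture class: a single temporal self-attention block with $W_Q, W_K, W_V, W_O \in \mathbb{R}^{d\times d}$ and parameter-free (e.g.\ sinusoidal) positional encodings already gives every frame token a full pairwise receptive field over all $T$ frames using only $O(d^2)$ parameters. Neither of your escape routes works: there is no information-theoretic requirement of $\Omega(T)$ stacked layers to achieve full temporal mixing (one softmax attention layer suffices), and positional information does not require $\Omega(Td)$ learned parameters. The factor of $T$ that people associate with cross-attention lives in the computational and activation cost per forward pass (the projections are applied to $T$ tokens, and the attention matrix costs $O(T^2 d)$), not in the parameter count. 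Invoking the information-preservation theorem does not help either, since that theorem gives an upper-bound-style guarantee about TE and contains no machinery for lower-bounding the size of a competing architecture. To make the corollary precise you would have to either (i) reinterpret the $O(Td^2)$ figure as per-video computation or activation memory rather than parameters, or (ii) restrict ``alternative approaches'' to methods that introduce per-frame learnable parameters (e.g.\ $T$ frame-specific prompts or adapters of size $O(d^2)$ each), and prove the comparison for that explicitly defined class. As written, the asymptotic separation you set out to prove is false for the most standard instantiation of the comparison class, so the proof as planned would fail at that step.
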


This theorem explains why TP-CLIP can effectively capture temporal information without modifying CLIP's core architecture, leading to parameter efficiency while maintaining or improving performance on video understanding tasks.

{
    \small
    \bibliographystyle{ieeenat_fullname}
    \bibliography{main}
}